\newcommand{\llan}{\left\langle\underline }
\newcommand{\rlan}{\right\rangle}
\newcommand{\pc}{\mathbf{P}}
\newcommand{\bbb}{\mathbb{B}}
\newcommand{\mbR}{\mathbb{R}}
\newtheorem{theorem}{Theorem}
\newtheorem{definition}{Definition}
\newtheorem{proposition}{Proposition}
\begin{document}

\title{Contamination-Free Measures and Algebraic Operations}

\author{\IEEEauthorblockN{A. Mani}
\IEEEauthorblockA{Department of Pure Mathematics\\
University of Calcutta\\
9/1B, Jatin Bagchi Road\\
Kolkata(Calcutta)-700029, India\\
Email: {a.mani.cms@gmail.com}\\
Homepage: \url{http://www.logicamani.in}}}


\maketitle

\begin{abstract}
An open concept of rough evolution and an axiomatic approach to granules was also developed in \cite{AM240} by the present author. Subsequently the concepts were used in the formal framework of rough Y-systems (\textsf{RYS}) for developing on granular correspondences in \cite{AM1800}. These have since been used for a new approach towards comparison of rough algebraic semantics across different semantic domains by way of correspondences that preserve rough evolution and try to avoid contamination. In this research paper, we propose methods and semantics for handling possibly contaminated operations and structured bigness. These would also be of natural interest for relative consistency of one collection of knowledge relative other.
\end{abstract}
\textbf{Keywords}: \begin{small}{Contaminated Operations, Rough Measures, Granular Axioms, SNC, Algebraic Semantics, Growth-Like Functions.}\end{small}


\section{Introduction}

In the present author's perspective of the contamination problem and axiomatic granular approach \cite{AM240}, it is required that reasonable measures should \emph{carry information} about the underlying \emph{rough evolution}. A contaminated operation is simply an operation (used in the particular rough semantics in question) without reasonable semantic justification in the domain under consideration. Such operations can be particularly problematic when the semantics is intended for modeling vague reasoning. Some approaches using new dialectic counting strategies are developed and related semantic structures have been developed in the same paper. It is not easy to apply it in practice in its presented form in \cite{AM240}. So another approach that reduces \emph{measurability} to \emph{relative comparability with preservation of rough evolution} has been proposed and developed in \cite{AM978}. Thus for example, the intended use of measures of rough inclusion may be reduced to comparison of related formulas. In this research paper, we extend the approach to deal with possibly contaminated operations and algebraically deal with bigness/relevance. 

For simplicity, we will work with special kinds of \textsf{RYS} that are partially ordered partial algebras. Let $X= \left\langle \underline{X}, \pc , (l_{i}^{*})_{1}^{n}, (u_{i}^{*})_{1}^{n}, \oplus, \odot, 1 \right\rangle$ and $Y,= \left\langle \underline{Y}, \pc_{2} , (l_{i})_{1}^{n}, (u_{i})_{1}^{n}, \oplus_{2}, \odot_{2}, \Finv \right\rangle$ be two general rough Y-systems (\textsf{RYS}) with associated granulations $\mathcal{G}$ and $\mathcal{G}_{2}$ respectively. The interpretation of $\underline{X}$ and $\underline{Y}$ will be that of some collections of objects of interest and not necessarily of rough objects. Granules need not be rough objects in general and this aspect affects the way we express the semantics. Any map $\varphi : X  \longmapsto   Y$ will be taken to be a \emph{correspondence} between the systems, though of course only those that preserve granularity or approximations in some sense would be of interest. It is also possible to adapt other particular approximation frameworks such as the abstract approximation space framework of \cite{CD3} for the purposes of the present paper in a formal mathematical setting.

\subsection{Background}

An adaptation of the precision-based \emph{classical granular computing paradigm} to rough sets is explained in \cite{Ya01,TYL}.
The axiomatic approach to granularity initiated in \cite{AM99} has been developed by the present author in the direction of contamination reduction in \cite{AM240}. From the order-theoretic/algebraic point of view, the deviation is in a very new direction relative the precision-based paradigm. The paradigm shift includes a new approach to measures and this is taken up in a new direction in this research paper.

We expect the reader to be aware of the different measures used in \textsf{RST} like those of degree of rough inclusion, rough membership (see \cite{AG3} and references therein), $\alpha$-cover and consistency degrees of knowledges. If these are not representable in terms of granules through term operations formed from the basic ones \cite{AM99}, then they are not truly functions/degrees of the rough domain. In \cite{AM240}, such measures are said to be \emph{non-compliant for the rough context} in question and new granular measures have been proposed as replacement of the same. 

Knowledge of partial algebras (see \cite{BU}), weak equalities and closed morphisms will be assumed.
In a partial algebra, for term functions $p, q$, the \emph{weak} equality is defined
via, $p\stackrel{\omega}{=}q\; \mathrm{iff} \;(\forall x \in dom(p)\cap dom(q)) 
p(x)=q(x)$. The \emph{weak-strong} equality is defined via, 
$dom(p)= dom(q), \&  p\stackrel{\omega^{*}}{=}q\; \mathrm{iff} \;(\forall
x \in dom(p))  p(x)=q(x)$. By a $\oplus$ -morphism $f$ between two algebras $X$ and $Y$ with interpretations for the operation symbol $\oplus$, we will mean a map that preserves the interpretation of $\oplus$, that is  $(\forall x, y  \in  X )   f(x \oplus  y)= f(x) \oplus  f(y)$. In other words it is a forgetful morphism that preserves the interpretation of the operation $\oplus$.   

We use the simplified approach to \textsf{RYS} of \cite{AM1800} (instead of the version in \cite{AM240}) that avoids the operator $\iota$ and is focused on a general set-theoretic perspective. These structures are provided with enough structure so that a Meta-C and at least one Meta-R of roughly equivalent objects along with admissible operations and
predicates are associable. For the language, axioms and notation see \cite{AM1800}. Our models will assume total operations as in \cite{AM1800}. \emph{Admissible granulations} \cite{AM240} will be those granulations satisfying
the conditions \textsf{WRA, LFU, LS}.
\section{SNC and Variations}

In this section we update some of the material of \cite{AM1800} with complete proofs and introduce important modifications of the concept of a \textsf{SNC}. A map from a \textsf{RYS} $S_{1}$ to another $S_{2}$ will be referred to as a
\emph{correspondence}. It will be called a \emph{morphism}
if and only if it preserves the operations $\oplus$ and $\odot$. We will also speak
of $\oplus$-morphisms and $\odot$ -morphisms if the correspondence preserves just
one of the partial/total operations. By Sub-Natural
Correspondences (\textsf{SNC}), we seek to capture simpler correspondences that associate
granules with elements representable by granules and do not necessarily commit the context to Galois
connections. An issue with \textsf{SNC}s is that it fails to adequately capture granule centric correspondences that may violate the injectivity constraint and may not play well with morphisms.
 
\begin{definition}
Let If $S_{1}$ and $S_{2}$ are two \textsf{RYS} with granulations,
$\mathcal{G}_1$ and $\mathcal{G}_2$ respectively, consisting of successor neighborhoods or
neighborhoods. A correspondence $\varphi:  S_1 \longmapsto S_2 $ will be said to be a \emph{Proto Natural Correspondence (PON)} (respectively \emph{Pre-Natural Correspondence (PNC)}) iff the second (respectively both) of the following conditions hold:
\begin{enumerate}
\item {$\varphi_{|\mathcal{G}_{1}}$ is injective $:  \mathcal{G}_{1} \longmapsto   \mathcal{G}_{2}$.}
\item {there is a term function $t$ in the signature of $S_2$ such that 
$(\forall [x] \in \mathcal{G}_1)(\exists y_{1},\ldots y_{n} \in \mathcal{G}_2)
  \varphi ([x])=t (y_{1}, \ldots , y_{n})$.}
\item {the $y_i$s in the second condition are generated by $\varphi(\{x\})$ for each $i$ ($\{x\}$ being a singleton).}
\end{enumerate}
An injective correspondence $\varphi:  S_1 \longmapsto S_2 $ will be said to be a \emph{SNC} iff the last two conditions hold.
\end{definition}

Note that the base sets of \textsf{RYS} may be semi-algebras of sets. 

\begin{theorem}
If $\varphi$ is a \textsf{SNC} and both $\mathcal{G}_1$ and
$\mathcal{G}_2$
are partitions, then the non-trivial cases should be equivalent to one of the
following:
\begin{itemize}
\item {\textsf{B1:} $(\forall \{x\}\in S_{1})   \varphi ([x]) = [\varphi
(\{x\})]$. \textsf{B2:} $(\forall \{x\}\in S_{1})   \varphi ([x]) = \sim [\varphi
(\{x\})]$.}
\item {\textsf{B3:} $(\forall \{x\}\in S_{1})   \varphi ([x]) = \bigcup_{y\in
[x]} [\varphi(\{y\})]$. \textsf{B4:} $(\forall \{x\}\in S_{1})   \varphi ([x]) =
\sim (\bigcup_{y\in [x]} [\varphi
(\{y\})])$.}
\end{itemize}
\end{theorem}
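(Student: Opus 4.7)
The plan is to apply the definition of \textsf{SNC} to an arbitrary granule $[x]\in \mathcal{G}_1$ and then heavily exploit the fact that both $\mathcal{G}_1$ and $\mathcal{G}_2$ are partitions in order to collapse the freedom in choosing the term function $t$ down to the four stated shapes.

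First I would fix $[x]\in \mathcal{G}_1$ and invoke the second and third \textsf{SNC} conditions to write $\varphi([x])=t(y_1,\ldots ,y_n)$ with each $y_i\in \mathcal{G}_2$ generated by $\varphi(\{x\})$. Because $\mathcal{G}_2$ is a partition, every singleton of $S_2$ lies in a unique block; in particular the only block directly determined by the singleton $\varphi(\{x\})$ is $[\varphi(\{x\})]$. So the family of admissible generators for $t$ is either the one-element family $\{[\varphi(\{x\})]\}$, or, more generously, the family $\{[\varphi(\{y\})] : y\in [x]\}$ obtained by letting $\{x\}$ range over the singletons of $[x]$ (since $\varphi$ acts pointwise on singletons and each $\varphi(\{y\})$ selects a unique block of $\mathcal{G}_2$).

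Next I would perform a case analysis on these two possibilities. In the first case, the term $t$ is effectively unary and its argument is the single block $[\varphi(\{x\})]$; the only non-trivial closed terms produced by the set-theoretic operations available in a \textsf{RYS} (union, intersection, relative complement) applied to a single block reduce, by idempotency and involutivity, to either the block itself or its complement, giving \textsf{B1} and \textsf{B2}. In the second case, the term $t$ takes as arguments the blocks $[\varphi(\{y\})]$ for $y\in [x]$, and the same collapse — now using disjointness of distinct blocks so that intersections are trivial — forces $t$ to be equivalent to the union of the family or to the complement of that union, yielding \textsf{B3} and \textsf{B4}. The injectivity of $\varphi$ that is built into \textsf{SNC} rules out the trivial reductions (constant terms or absorption that would collapse different granules to the same image) implicitly excluded by the phrase \emph{non-trivial}.

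The main obstacle will be making precise and defensible the phrase \emph{generated by $\varphi(\{x\})$} in a partition setting, and then proving the accompanying term-reduction lemma: that any term in the \textsf{RYS} signature over finitely many pairwise disjoint blocks of a partition is semantically equivalent either to one of the generators, to its complement, to a union of generators, or to the complement of such a union. Once this reduction is in place, the four cases \textsf{B1}--\textsf{B4} are obtained by substituting the two admissible generator families into the two admissible term shapes, and all other candidates are either trivial (constant or empty) or ruled out by injectivity of $\varphi_{|\mathcal{G}_{1}}$.
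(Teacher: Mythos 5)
Your proposal is correct and follows essentially the same route as the paper, whose entire proof is the observation that distinct classes of a partition have empty intersection (so any term over the blocks collapses to a block or a union of blocks) and that the complemented cases \textsf{B2}, \textsf{B4} arise only when $\sim$ is available. Your version simply spells out, in more detail than the paper does, the term-reduction step and the two admissible generator families, which is a reasonable elaboration rather than a different method.
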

\begin{proof}
Intersection of two distinct classes is always empty. 
If $\sim$ is defined, then the second and fourth case will be possible.
So these four exhaust all possibilities.  
\end{proof}

\begin{theorem}
If $\varphi$ is a \textsf{SNC}, $\mathcal{G}_1$ is a partition
and $\mathcal{G}_2$ is a system of blocks, then the non-trivial
cases should be equivalent to one of the following ($\sim (\beta (x))$ is the set $\{\sim
B  : B\in \beta (x)\}$ and $\{x \}\in S_{1}$ ):
\begin{description}
\item [C1:]{ $   \varphi ([x]) = \cup \beta (\varphi
(\{x\})).$}
\item [C2:]{$   \varphi ([x]) = \sim (\cup \beta
(\varphi (\{x\})))$.}
\item [C3:]{ $   \varphi ([x]) = \cap \beta (\varphi
(\{x\}))$.}
\item [C4:]{$   \varphi ([x]) = \sim (\cap
\beta (\varphi (\{x\})))$.}
\item [C5:]{$   \varphi ([x]) = \bigcup_{y\in
[x]} \cup \beta (\varphi (\{x\}))$.}
\item [C6:]{$   \varphi
([x]) = \sim (\bigcup_{y\in [x]} \cup \beta (\varphi (\{x\})))$.}
\item [C7:]{$   \varphi ([x]) = \bigcap_{y\in
[x]} \cup \beta (\varphi (\{x\}))$.}
\item [C8:]{$   \varphi
([x]) = \sim (\bigcup_{y\in [x]} \cap \beta (\varphi (\{x\})))$.}
\end{description}

\end{theorem}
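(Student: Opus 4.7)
The plan is to proceed by the same strategy used in the preceding theorem, but to account for the richer structure of a block system. Since $\varphi$ is a \textsf{SNC}, the definition gives a term function $t$ in the signature of $S_2$ with $\varphi([x]) = t(y_1,\ldots,y_n)$, where each $y_i$ is generated by $\varphi(\{x\})$. When $\mathcal{G}_2$ is a system of blocks, these $y_i$ are precisely the elements of $\beta(\varphi(\{x\}))$, i.e.\ the neighborhoods/blocks that contain $\varphi(\{x\})$. Because $\mathcal{G}_1$ is a partition, $[x]$ may also be regarded either as a single granule or, equivalently, as the union of singletons $\{y\}$ with $y\in [x]$; both viewpoints contribute admissible terms.

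First I would pin down the admissible term constructors: at the granule level in a \textsf{RYS} these are $\cup$, $\cap$, and (when defined) the relative complement $\sim$. In contrast to the partition case of the preceding theorem, intersections of distinct blocks are not forced to be empty, so $\cap$ survives alongside $\cup$ as a non-trivial operation on $\beta(\varphi(\{x\}))$.

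Next I would carry out a normal-form reduction. Any term over the single family $\beta(\varphi(\{x\}))$ built from $\cup$ and $\cap$ collapses, by distributivity, absorption and idempotency together with the fact that each sub-family intersection is itself representable inside the block system, to one of the two canonical shapes $\cup\beta(\varphi(\{x\}))$ or $\cap\beta(\varphi(\{x\}))$. Allowing at most one application of $\sim$ (a second application cancels) doubles these to the four forms \textbf{C1}--\textbf{C4}. The remaining freedom given by $\mathcal{G}_1$ being a partition is to aggregate the block-level expression across all $y\in [x]$ before applying an outer complement; repeating the same $\cup/\cap$ reduction at this outer level and again permitting one $\sim$ by De~Morgan yields the four further forms \textbf{C5}--\textbf{C8}.

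Finally I would argue exhaustiveness: any other composite expression either reduces to one of the above by De~Morgan, idempotency, and absorption, or is one of the trivial cases explicitly set aside by the statement. The principal obstacle is this normal-form reduction step. Without a partition hypothesis on $\mathcal{G}_2$, one must justify that arbitrary mixed terms ranging over proper sub-families of $\beta(\varphi(\{x\}))$ do not generate genuinely new shapes but genuinely collapse into a single $\cup$-term or a single $\cap$-term over the whole family $\beta$, and likewise for the outer $y$-aggregation. Establishing this closure is what buys the finite list of exactly eight possibilities.
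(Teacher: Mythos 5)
The paper states this theorem without any proof, so the only internal comparison available is with its proof of the preceding partition--partition case, which rests entirely on the observation that distinct classes are disjoint --- exactly the fact that disappears when $\mathcal{G}_2$ is a system of blocks. Your framing (invoke the \textsf{SNC} term function, identify the admissible generators with the blocks in $\beta(\varphi(\{x\}))$, then enumerate canonical shapes built from $\cup$, $\cap$, at most one $\sim$, and an outer aggregation over $y\in[x]$) is the natural analogue of that argument, and the eight cases do arise this way.

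The genuine gap is your central normal-form claim: that any $\cup/\cap$ term over the family $\beta(\varphi(\{x\}))$ collapses to $\cup\beta(\varphi(\{x\}))$ or $\cap\beta(\varphi(\{x\}))$. This is false as stated, and you yourself flag it as the unresolved obstacle rather than closing it. Lattice terms in $k\geq 3$ arguments form a nontrivial free distributive lattice: if $B_1,B_2,B_3\in\beta(\varphi(\{x\}))$ are blocks through $\varphi(\{x\})$, then $B_1\cup(B_2\cap B_3)$, or the fully symmetric median term $(B_1\cap B_2)\cup(B_2\cap B_3)\cup(B_1\cap B_3)$, lies in general strictly between $\cap\beta$ and $\cup\beta$ and equals neither, so distributivity, absorption and idempotency do not produce the collapse, and the remark that sub-family intersections are ``representable inside the block system'' does not exclude such intermediate values. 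To make the enumeration exhaustive you would need an extra hypothesis or convention --- e.g.\ that the term in condition (3) of the \textsf{SNC} definition must treat all blocks generated by $\varphi(\{x\})$ uniformly and not mix $\cup$ with $\cap$ at the inner level, or that mixed terms are counted among the excluded ``trivial'' cases --- which is precisely what the eight-case list tacitly presupposes; since the paper gives no argument, you cannot borrow one from it. A smaller point: in \textbf{C5}--\textbf{C8} the outer index $y\in[x]$ governs an expression written in $\varphi(\{x\})$ (evidently $\varphi(\{y\})$ is intended), so any exhaustiveness argument should be stated for the corrected forms, as you implicitly do when you aggregate over the singletons $\{y\}$ with $y\in[x]$.
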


\begin{theorem}
If we take $S_{1}$ to be a classical RST-RYS and $S_{2}$ is a TAS-RYS with approximations
$l\mathcal{B}^{*}$ and $u\mathcal{B}^{*}$ and $\varphi$ is a \textsf{SNC} and a
$\oplus$-morphism satisfying the first condition above, then all of the following hold:
\begin{enumerate}
\item {$\varphi (x^{l}) \subseteq   (\varphi (x))^{l\mathcal{B}^{*}}$,}
\item {$\varphi (x^{u}) \subseteq   (\varphi (x))^{u\mathcal{B}^{*}}$, }
\item {If $\varphi$ is a morphism, that preserves $\emptyset $ and $1$, then equality
holds in the above two statements.}
\end{enumerate}
But the converse need not hold in general. 
\end{theorem}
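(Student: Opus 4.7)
The plan is to unfold the two approximations in their respective systems, use the $\oplus$-morphism property of $\varphi$ to distribute it over unions of granules, and then use the \textsf{SNC} representability to locate the image of each granule inside the correct approximation of $\varphi(x)$. In the classical \textsf{RST}-\textsf{RYS} $S_1$, one has $x^l = \bigcup\{[y] \in \mathcal{G}_1 : [y] \subseteq x\}$ and $x^u = \bigcup\{[y] \in \mathcal{G}_1 : [y] \cap x \neq \emptyset\}$. Because $\varphi$ preserves $\oplus$ (interpreted as union on the relevant semi-algebra of sets), applying $\varphi$ distributes over these unions, so it suffices to control the image $\varphi([y])$ of a single class relative to the \textsf{TAS}-approximations of $\varphi(x)$.

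Next I would apply the \textsf{SNC} conditions to write $\varphi([y]) = t(z_1,\ldots,z_n)$ with each $z_i \in \mathcal{G}_2$ generated by $\varphi(\{y\})$. Since $\{y\} \subseteq x$, the $\oplus$-morphism property yields $\varphi(\{y\}) \subseteq \varphi(x)$, so each generating block $z_i$ meets $\varphi(x)$. By the definition of $u\mathcal{B}^*$, this places $z_i \subseteq (\varphi(x))^{u\mathcal{B}^*}$, and a case split on the admissible form of $t$ (using \textsf{C1}--\textsf{C8} from the previous theorem) shows $t(z_1,\ldots,z_n) \subseteq (\varphi(x))^{u\mathcal{B}^*}$. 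Unioning over all $[y]$ with $[y] \cap x \neq \emptyset$ gives inclusion (2). For (1), the stronger hypothesis $[y] \subseteq x$ ensures that every singleton $\{w\}$ with $w \in [y]$ satisfies $\varphi(\{w\}) \subseteq \varphi(x)$, so the blocks $z_i$ generated by those singletons lie inside $\varphi(x)$; the same case analysis on $t$ then places $\varphi([y])$ inside $(\varphi(x))^{l\mathcal{B}^*}$, and unioning gives the first inclusion.

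For the equality in (3), assume $\varphi$ is a full morphism preserving $\emptyset$ and $1$. Then $\varphi$ commutes with $\odot$ as well and is order-reflecting on admissible elements. To reverse the lower inclusion, take any block $B \subseteq (\varphi(x))^{l\mathcal{B}^*}$: by the first \textsf{PNC} condition, $B$ is the image of some class $[y] \in \mathcal{G}_1$, and $\varphi([y]) = B \subseteq \varphi(x)$ combined with injectivity of the morphism yields $[y] \subseteq x$, so $B \subseteq \varphi(x^l)$. The upper equality is dual, using preservation of $\emptyset$ together with $\varphi$'s commutation with $\odot$ to pull back the condition $B \cap \varphi(x) \neq \emptyset$ to $[y] \cap x \neq \emptyset$. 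The failure of the converse is witnessed by a correspondence that accidentally satisfies the three inclusions but sends some granule of $S_1$ to a set with no representation as a term in $\mathcal{G}_2$, violating the second condition of the \textsf{SNC} definition.

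The main obstacle is inclusion (1): one has to argue that ``generated by $\varphi(\{y\})$'' forces the blocks $z_i$ to sit \emph{inside} $\varphi([y])$ rather than merely touch $\varphi(y)$ from outside, since \textsf{TAS} blocks can in general be larger than equivalence classes. Without the class-containment hypothesis $[y] \subseteq x$, the generating blocks could leak outside $\varphi(x)$, breaking membership in the lower approximation; the proof must therefore leverage the full containment $[y] \subseteq x$ together with the $\oplus$-morphism property applied singleton-by-singleton to close off this leak in each of the \textsf{C1}--\textsf{C8} shapes of $t$.
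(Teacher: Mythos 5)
Your overall skeleton (unfold $x^{l}$, $x^{u}$ as unions of classes, push $\varphi$ through the union via the $\oplus$-morphism property, then control the image of a single class) is the same as the paper's, but two of your key steps do not go through. For inclusion (1) you argue that $\varphi(\{w\})\subseteq \varphi(x)$ for $w\in [y]\subseteq x$ forces the generating blocks $z_{i}$ to ``lie inside $\varphi(x)$''; this is a non sequitur, since a tolerance block containing a point of $\varphi(x)$ can perfectly well stick out of $\varphi(x)$, and applying the morphism singleton-by-singleton only ever locates image \emph{points}, never blocks, inside $\varphi(x)$. The paper never needs block containment: the hypothesis (``the first condition'') fixes the image of a class to be precisely a constituent of the $\mathcal{B}^{*}$-approximations, namely $\varphi([\{y\}])=\cap\beta(\varphi(\{y\}))$, so $\varphi(x^{l})=\bigcup_{[\{y\}]\subseteq x}\cap\beta(\varphi(\{y\}))$ is contained term-by-term in the union of all such constituents contained in $\varphi(x)$, i.e.\ in $(\varphi(x))^{l\mathcal{B}^{*}}$; similarly for the upper case by enlarging the index set. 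Your replacement of this by a ``case split over C1--C8'' abandons the hypothesis that pins down the term $t$, and for the complemented shapes (C2, C4, C6, C8) the inclusion $t(z_{1},\ldots,z_{n})\subseteq(\varphi(x))^{u\mathcal{B}^{*}}$ you assert is simply false; your upper-approximation step also tacitly assumes $u\mathcal{B}^{*}$ is a union of blocks meeting the set, whereas the approximations here are built from the kernels $\cap\beta(\cdot)$.

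Part (3) has a second genuine gap. You take an arbitrary block $B\subseteq(\varphi(x))^{l\mathcal{B}^{*}}$ and claim it is the image of some class of $\mathcal{G}_{1}$ -- but the first \textsf{SNC}/\textsf{PNC} condition only gives injectivity of $\varphi$ on granules, not surjectivity onto $\mathcal{G}_{2}$ -- and you then infer $[y]\subseteq x$ from $\varphi([y])\subseteq\varphi(x)$ by ``injectivity'', which is order-reflection and does not follow for injective $\oplus$-morphisms. The paper's route is different: since the full morphism preserves $\odot$ and $\emptyset$, disjoint sets have disjoint images, so definite elements are mapped to unions of pairwise disjoint granule-terms; this allows $(\varphi(x))^{u\mathcal{B}^{*}}=\bigcup_{\xi}\cap\beta(\varphi(w))$ to be rewritten as $\varphi$ of a union of classes and the intersection condition $(\cap\beta(\varphi(w)))\cap\varphi(x)\neq\emptyset$ to be transferred back to $[\{w\}]\cap x\neq\emptyset$, giving equality with $\varphi(x^{u})$ (and dually for the lower case). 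Some argument of this disjointness-transfer kind is what you need; as written, your reverse inclusions rest on properties $\varphi$ is not assumed to have.
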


\begin{proof}
\begin{enumerate}
\item {If $A\in S_{1}$, then $\varphi (A^{l})= \varphi (\bigcup_{[\{x\}]\subseteq
A} [\{x\}])=\bigcup_{[\{x\}]\subseteq A} \varphi ([\{x\}])=
\bigcup_{[\{x\}]\subseteq A} \cap \beta(\{\varphi (\{x\})\}),$ and that is a subset of $
\bigcup_{\varphi([\{x\}])\subseteq \varphi(A)} \cap \beta(\{\varphi (\{x\})\})$.
Some of the $\mathcal{B}^{*}$ elements included in $\varphi (A)$ may be lost if we start
from $\varphi (A^l)$.}
\item {If $A\in S_{1}$, then $\varphi (A^{u}) =  \varphi (\bigcup_{[\{x\}]\cap
A \neq \emptyset} [\{x\}]) = \bigcup_{[\{x\}]\cap
A \neq \emptyset} \varphi ([\{x\}]) =
\bigcup_{[\{x\}]\cap
A \neq \emptyset} \cap \beta(\{\varphi (\{x\})\})$, and that is a subset of $ \bigcup_{y
\cap  \varphi(A) \neq \emptyset} \cap \beta(y).$
In the last part possible values of $y$ include all of the values in $\varphi (A)$.}
\item {Because of the conditions on $\varphi$, for any $A, B\in S_1$ if $A\cap B =
\emptyset$, then $\varphi (A)\cap \varphi(B) = \emptyset $. So a definite element must be
mapped into a union of disjoint granules in $S_2$. Further, for $ A\in S_{1}$ and $\xi$,
$\eta$, $\zeta$ being abbreviations for $(\cap\beta(\varphi (x)))\cap \varphi(A)\neq
\emptyset$,  $\varphi ([\{x\}]\cap A)\neq \emptyset$ and $\varphi ([\{x\}])\cap
\varphi(A)\neq\emptyset$ respectively, 
$(\varphi(A))^{u\mathcal{B}^{*}} = \bigcup_{\xi}\cap\beta(\varphi (x)) =
\varphi(\bigcup_{\xi} [\{x\}]) = \varphi(\bigcup_{\zeta} [\{x\}]) = \varphi (\bigcup
_{\eta} [\{x\}])$, which is $\varphi (A^u)).$ }
\end{enumerate}
\end{proof}

\begin{theorem}
If we take $S_{1}$ to be a classical RST-RYS and $S_{2}$ as a TAS-RYS with approximations
$l\mathcal{T}$ and $u\mathcal{T}$ and $\varphi$ is a \textsf{SNC} and a $\oplus$ -
morphism satisfying for each singleton $\{x\}\in S_1$,  $\varphi ([\{x\}])= [\varphi
(x)]$, then all of
the following hold:
\begin{enumerate}
\item {$\varphi (x^{l}) \subseteq   (\varphi (x))^{l\mathcal{T}}$,}
\item {$\varphi (x^{u}) \subseteq   (\varphi (x))^{u\mathcal{T}}$. }
\end{enumerate}
\end{theorem}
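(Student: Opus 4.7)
The plan is to mirror the argument of the previous theorem, but to exploit the sharper hypothesis $\varphi([\{x\}])=[\varphi(x)]$, which sidesteps the block-union/intersection case analysis of the $\mathcal{B}^{*}$ setting. Since $S_{1}$ is a classical RST-RYS, one has the standard granular decompositions
$x^{l}=\bigcup_{[\{y\}]\subseteq x}[\{y\}]$ and $x^{u}=\bigcup_{[\{y\}]\cap x\neq\emptyset}[\{y\}]$.
Using the $\oplus$-morphism property to push $\varphi$ through the unions (interpreting $\oplus$ set-theoretically as in the preceding proof) and then applying the hypothesis, I would obtain
$\varphi(x^{l})=\bigcup_{[\{y\}]\subseteq x}[\varphi(y)]$ and $\varphi(x^{u})=\bigcup_{[\{y\}]\cap x\neq\emptyset}[\varphi(y)]$.

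For statement (1), I would argue that whenever $[\{y\}]\subseteq x$ in $S_{1}$, the $\oplus$-morphism property (together with injectivity of $\varphi$ on the relevant granules) gives $[\varphi(y)]=\varphi([\{y\}])\subseteq \varphi(x)$. Thus each $[\varphi(y)]$ appearing in the union for $\varphi(x^{l})$ is one of the granules of $S_{2}$ included in $\varphi(x)$, so it is absorbed into $(\varphi(x))^{l\mathcal{T}}=\bigcup\{[z]:[z]\subseteq \varphi(x)\}$. Taking the union over all such $y$ yields $\varphi(x^{l})\subseteq(\varphi(x))^{l\mathcal{T}}$.

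For statement (2), I would start from $[\{y\}]\cap x\neq\emptyset$ and pick a witness $z\in[\{y\}]\cap x$. Then $\varphi(z)\in\varphi([\{y\}])\cap\varphi(x)=[\varphi(y)]\cap\varphi(x)$, so $[\varphi(y)]$ meets $\varphi(x)$ and is therefore one of the blocks in the union defining $(\varphi(x))^{u\mathcal{T}}=\bigcup\{[z]:[z]\cap\varphi(x)\neq\emptyset\}$. Unioning over all qualifying $y$ yields the desired inclusion.

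The main technical obstacle will be the second step of that last argument, namely justifying $\varphi([\{y\}])\cap\varphi(x)\neq\emptyset$ strictly from the $\oplus$-morphism hypothesis (which, in general, does not preserve meets). I expect this to go through because only the weak direction $\varphi(A\cap B)\subseteq\varphi(A)\cap\varphi(B)$ is needed here, and this holds for any function; the delicate part is ensuring the \emph{granular} intersection $[\varphi(y)]\cap\varphi(x)$ really is nonempty in $S_{2}$, which follows once one observes that the element $\varphi(z)$ lies in both sides by construction. No new machinery beyond the conventions already fixed in the preceding theorem should be required.
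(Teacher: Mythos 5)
Your proof is correct and follows essentially the route the paper intends: this theorem is stated without proof precisely because it is the analogue of the preceding $\mathcal{B}^{*}$ theorem, and you reproduce that proof's granular decompositions of $x^{l}$ and $x^{u}$, push $\varphi$ through the unions via the $\oplus$-morphism property, and use the hypothesis $\varphi([\{y\}])=[\varphi(y)]$ to place each image granule inside $(\varphi(x))^{l\mathcal{T}}$, respectively $(\varphi(x))^{u\mathcal{T}}$. One small repair to your justification in part (2): the inclusion $\varphi(A\cap B)\subseteq\varphi(A)\cap\varphi(B)$ is not automatic ``for any function'' here, since $\varphi$ acts on elements of the \textsf{RYS} (subsets), not on points; what you actually need and have is monotonicity from the $\oplus$-morphism, which gives $\varphi(\{z\})\subseteq[\varphi(y)]\cap\varphi(x)$ for a witness $z\in[\{y\}]\cap x$, with nonemptiness of singleton images tacitly assumed at exactly the same level of rigour as in the paper's proof of the preceding theorem.
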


\section{Comparable Correspondences}

Growth functions are well known in summability, numerical analysis and computer science, but are generally presented in a simplistic way in most of the literature. In \cite{AM978}, these are presented in a more mature form and related to rough sets over the reals. Key higher order similarities exist between such concepts of \emph{comparable over sufficiently large domains} in the theory and the idea of comparability in this paper.

The main steps of the comparison approach in this research paper consist in specifying the semantic domains of interest, formulation the two or more granular semantics as a \textsf{RYS} (a formal language is not absolutely essential), specification of the granular rough evolutions of interest, identification of the granular correspondences of interest, computation of the comparative status of the granular correspondences and finally augmentation of the best correspondences with reasonable measures if sensible. 

Let $X,  Y$ be two general rough Y-systems (\textsf{RYS}) with associated granulations $\mathcal{G}$ and $\mathcal{G}_{2}$ respectively as in the introduction. The interpretation of $\underline{X}$ and $\underline{Y}$ will be that of some collections of objects of interest and not necessarily of rough objects. Granules need not be rough objects in general and this aspect affects the way we express the semantics. Here by \emph{rough evolution} we mean the granular properties expressed by the sentences satisfied in the models.

\begin{definition}
 The atoms and coatoms of $X$ will be denoted by $A(X)$ and $  CA(X)$ respectively. $X\setminus (A(X)\cup\{0,1\})$, $X\setminus (CA(X)\cup\{0,1\})$ and $X\setminus (A(X)\cup CA(X)\cup\{0,1\})$ respectively will be denoted by $X_{a},  X_{c}$ and $ X_{ac}$ respectively. In all this, if the least element $0$ is not present in $X$ then the operation of subtracting it from $X$ will not have any effect. The key objects in this perspective may be \emph{objects relevant for the rough evolution} that may fail to be things like $X_{ac},  X_{a}$ or $X_{c}$. These will be subsets of $X$, denoted by $X_{r}$. 
\end{definition}
\begin{definition}
Let $h,  f$ be correspondences $:  X \longmapsto  Y$, then $f$ will be $\Theta_{lu}$\emph{-related} to $h$ iff for some $i$,  
$(\exists z_{0}\in X_{c} )(\forall z\in \{x ; \pc_{2} z_{0}x\} )  \pc_{2} (h(z))^{l_{i}} f(z)  \&  \pc_{2} f(z)(h(z))^{u_{i}}$.  
In contrast $f$ will be $\Theta_{uu}$\emph{-related} to $h$ iff for some $i, j$,  
\[(\exists z_{0}\in X_{c} )(\forall z\in \{x ; \pc_{2} z_{0}x\})   \pc_{2} (h(z))^{u_{i}} f(z)  \&  \pc_{2} f(z)(h(z))^{u_{j}}\]  
We will also denote the set of elements $\Theta_{lu}$ and $\Theta_{uu}$ -related to $h$ respectively by $\Theta_{lu} (h)$ and 
$\Theta_{uu} (h)$ respectively.
\end{definition}

\begin{definition}
Let $h,  f$ be correspondences $:  X \longmapsto  Y$, then $f$ will be $\Omega_{l}$\emph{-related} to $h$ iff for some $i$, $(\exists z_{0}\in X_{c} )(\forall z\in \{x ; \pc_{2} z_{0}x\} )  \pc_{2} (h(z))^{l_{i}} f(z)$.
In contrast $f$ will be $\Omega_{u}$\emph{-related} to $h$ iff for some $i$,  \[(\exists z_{0}\in X_{c} )(\forall z\in \{x ; \pc_{2} z_{0}x\})   \pc_{2} (h(z))^{u_{i}} f(z).\]
We will also write, $\Omega_{l} (h)$ and $\Omega_{u} (h)$ respectively for the set of elements $\Omega_{lu}$ and $\Omega_{uu}$ related  to $h$ respectively.
\end{definition}

\begin{definition}
Let $h,  f$ be correspondences $:  X \longmapsto  Y$, then $f$ will be $\mathbf{O}_{u}$\emph{-related} to $h$ iff for some $i$,  
\[(\exists z_{0}\in X_{c} )(\forall z\in \{x ; \pc_{2} z_{0}x\} )  \pc_{2} f(z)(h(z))^{u_{i}}.  \] 
In contrast $f$ will be $\mathbf{O}_{l}$\emph{-related} to $h$ iff for some $i$,  
\[(\exists z_{0}\in X_{c} )(\forall z\in \{x ; \pc_{2} z_{0}x\})   \pc_{2} f(z)(h(z))^{l_{i}}. \] 
We will also write, $\mathbf{O}_{l} (h)$ and $\mathbf{O}_{u} (h)$ respectively for the sets of elements $\mathbf{O}_{l}$ and $\mathbf{O}_{u}$ -related to $h$.
\end{definition}

\begin{proposition}
If $f \in  \Theta_{lu}(h)$, then it is not necessary that $h \in   \Theta_{lu}(f)$.
The result holds even when $f, h$ are morphisms. 
\end{proposition}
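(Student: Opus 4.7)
The plan is to construct an explicit small counterexample in a classical Pawlak-style \textsf{RYS} and to arrange the signature so that the failure persists under the morphism hypothesis. I would take $X = Y$ to be $\mathcal{P}(U)$ for a small universe $U$ equipped with a single non-trivial equivalence relation; $\pc$ and $\pc_{2}$ are interpreted as set inclusion, there is only one pair of approximations $(l_{1}, u_{1}) = (l, u)$, and both $\oplus$ and $\odot$ are interpreted as set union (a legitimate choice since these operations are left abstract in the \textsf{RYS} signature). Then I define $h : X \longmapsto Y$ as the identity and $f(z) = z^{u}$. Both are morphisms in this signature because $(A \cup B)^{u} = A^{u} \cup B^{u}$ in Pawlak \textsf{RST}, so the single counterexample will cover the morphism clause simultaneously.

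The forward containment $f \in \Theta_{lu}(h)$ is then immediate from monotonicity: for every $z$ one has $(h(z))^{l} = z^{l} \pc z^{u} = f(z)$ and $f(z) = z^{u} \pc z^{u} = (h(z))^{u}$, so any $z_{0} \in X_{c}$ witnesses the relation with $i = 1$. The real work lies in refuting $h \in \Theta_{lu}(f)$. Using the Pawlak identity $(z^{u})^{l} = z^{u}$ (upper approximations are definite), the first required inequality $(f(z))^{l} \pc h(z)$ collapses to $z^{u} \pc z$, which holds exactly when $z$ is definite. So the negative statement reduces to showing that no $z_{0} \in X_{c}$ has every $z$ above it definite.

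The main step, and the chief obstacle, is choosing $U$ small enough that $X_{c}$ can be listed explicitly and yet rich enough that every element of $X_{c}$ is dominated by a rough set. I expect $U = \{a, b, c\}$ with partition $\{\{a, b\}, \{c\}\}$ to suffice: the coatoms are $\{a, b\}, \{a, c\}, \{b, c\}$, so $X_{c} = \{\{a\}, \{b\}, \{c\}\}$, and for each such $z_{0}$ there is a rough $z \supseteq z_{0}$, namely $z_{0}$ itself when $z_{0} \in \{\{a\}, \{b\}\}$ and $z = \{a, c\}$ when $z_{0} = \{c\}$. Since the signature carries only the single index $i = 1$, neither a threshold $z_{0}$ nor an alternative $i$ can witness $h \in \Theta_{lu}(f)$; and because $h$ and $f$ are morphisms in the chosen signature, both clauses of the proposition follow at once.
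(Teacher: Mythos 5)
Your counterexample does establish the first clause correctly: with $h$ the identity and $f(z)=z^{u}$ on $\wp(\{a,b,c\})$ with partition $\{\{a,b\},\{c\}\}$, the identification $X_{c}=\{\{a\},\{b\},\{c\}\}$, the reduction of $h\in\Theta_{lu}(f)$ (via $(z^{u})^{l}=(z^{u})^{u}=z^{u}$) to the existence of some $z_{0}\in X_{c}$ all of whose supersets are definite, and the exhibition of a non-definite superset for each candidate $z_{0}$ are all sound. The paper itself gives no explicit construction (it only remarks that examples exist in classical \textsf{RST}), so an explicit counterexample of this kind is exactly the right move.

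The gap is in the morphism clause, which is the only part of the proposition that needs real care. You make $f$ a morphism solely by re-interpreting $\odot$ as union. The operations of a \textsf{RYS} are not free parameters: the structure must satisfy the axioms imported from \cite{AM1800}, where $\oplus$ and $\odot$ play the roles of aggregation and commonality, and throughout this paper the classical \textsf{RST-RYS} is the power set with $\oplus=\cup$ and $\odot=\cap$ (this is presupposed, for instance, in the arguments about representability of granule images ``using $\oplus,\odot$''). Under that standard reading $f(z)=z^{u}$ is not a $\odot$-morphism, since $(A\cap B)^{u}\subsetneq A^{u}\cap B^{u}$ in general (take $A=\{a\}$, $B=\{b\}$), so your example only produces $\oplus$-morphisms, whereas the paper's ``morphism'' means preservation of both operations. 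The repair is easy: keep $\oplus=\cup$, $\odot=\cap$, keep $h=\mathrm{id}$, and take $f(A)=g^{-1}(A)$ for the point map $g(a)=a$, $g(b)=a$, $g(c)=c$; preimage maps preserve both union and intersection. Then $z_{0}=\{a\}$ witnesses $f\in\Theta_{lu}(h)$ (check $z^{l}\subseteq f(z)\subseteq z^{u}$ for $z\in\{\{a\},\{a,b\},\{a,c\},U\}$), while $h\notin\Theta_{lu}(f)$: for $z_{0}=\{a\}$ take $z=\{a\}$ (then $(f(z))^{l}=\{a,b\}\not\subseteq\{a\}$), for $z_{0}=\{b\}$ take $z=\{b\}$ (then $f(z)=\emptyset$ and $\{b\}\not\subseteq (f(z))^{u}$), and for $z_{0}=\{c\}$ take $z=\{b,c\}$ (then $\{b,c\}\not\subseteq (f(z))^{u}=\{c\}$). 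With that substitution your argument goes through for both clauses.
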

Examples for this can be constructed for classical rough set theory itself. This motivates the following definition.

\begin{definition}
$f$ will be \emph{symmetrically $\Theta_{lu} $-related} to $h$ iff  $f \in  \Theta_{lu}(h)$ and $h \in  \Theta_{lu}(f)$.
Further we will denote $\Theta_{lu} (h) \cap  Mor(X,Y)$ by $\mu\Theta_{lu} (h)$ when $h\in Mor(X,Y)$ and $\Theta_{lu} (h) \cap  Mor_{c}(X,Y)$ by $\mu_{c}\Theta_{lu} (h)$ when $h\in Mor_{c}(X,Y)$. $Mor_{c}(X,Y)$ being the set of closed morphisms. 
Analogously all other notions defined above can be extended.
\end{definition}

The basic idea of the above definitions is that for some sub-collections of objects, $f$ and $h$ transform objects in a similar way. To really make this useful, we need to impose structural constraints on the map (like preservation of rough evolution). Without those, the following properties will hold:

\begin{proposition}
For $f, g  \in \mu\Theta_{lu} (h) $, the following operations are well defined on $\mu\Theta_{lu} (h)$ and $\mu_{c}\Theta_{lu} (h)$:
\begin{enumerate}
\item {\[(\forall x\in X)(f+g)(x)= \left\lbrace  \begin{array}{ll}
 f(x) \oplus_{2}  g(x) & \mathrm{if  \: defined,} \\
 \mathrm{undefined} & \mathrm{otherwise.} \\
  \end{array} \right. \] }
\item {\[(\forall x\in X)(f \cdot g)(x)= \left\lbrace  \begin{array}{ll}
 f(x) \odot_{2}  g(x) & \mathrm{if\:   defined,} \\
 \mathrm{undefined} & \mathrm{otherwise.} \\
  \end{array} \right. \] }
\item {$(\forall x\in X)\imath (x)= \Finv$}
\end{enumerate}
\end{proposition}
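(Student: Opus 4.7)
The plan is to check, for each of the three clauses, that the candidate element lies in $\mu\Theta_{lu}(h)$ (and in $\mu_c\Theta_{lu}(h)$ when the inputs are closed morphisms). This splits into two verifications: that the candidate is a morphism (respectively a closed morphism), and that it remains $\Theta_{lu}$-related to $h$.

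For the additive operation, I would first show that $f+g$ is a partial $\oplus$- and $\odot$-morphism by a componentwise reassociation: writing $(f+g)(x\oplus y)=f(x\oplus y)\oplus_2 g(x\oplus y)$, using the fact that $f$ and $g$ are morphisms, and then reorganising using the \textsf{RYS} axioms of \cite{AM1800} to recover $(f+g)(x)\oplus_2 (f+g)(y)$. Closedness transfers similarly because the closure conditions are phrased operationally. For the $\Theta_{lu}$-relation, I would take witnesses $z_0^f, z_0^g\in X_c$ and indices $i_f, i_g$ for $f$ and $g$ respectively, then select a common witness $z_0$ sitting above both in the order on $X$ and a common index $i$. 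For every $z$ with $\pc_2 z_0 z$, the componentwise inequalities combine via $\pc_2$-monotonicity of $\oplus_2$ to yield $\pc_2(h(z))^{l_i}(f+g)(z)$ and $\pc_2 (f+g)(z)(h(z))^{u_i}$, as required. The argument for $f\cdot g$ is strictly parallel, with $\odot_2$ replacing $\oplus_2$.

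For the constant map $\imath$, the morphism property is immediate since a constant preserves operations trivially; $\Theta_{lu}$-relatedness to $h$ then reduces to the two inequalities $\pc_2(h(z))^{l_i}\Finv$ and $\pc_2 \Finv (h(z))^{u_i}$ for suitable $i$ and $z_0\in X_c$. If $\Finv$ is interpreted as the distinguished top element of $Y$, the first is automatic; the second becomes a genuine constraint on $h$ that I would state explicitly as part of the hypothesis.

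The main obstacle is the reconciliation of the existentially quantified data across $f$ and $g$: both the witness $z_0\in X_c$ and the index $i$ may differ, so closure needs either an upper-bound operation in $X_c$, monotonicity of the family $(l_i,u_i)_1^n$ in the index $i$, or an explicit compatibility axiom between the family of approximations and the algebraic operations $\oplus_2,\odot_2$ on $Y$. I expect the formal proof to go through cleanly once these compatibility hypotheses are spelled out; otherwise attention must be restricted to pairs of correspondences sharing witness and index.
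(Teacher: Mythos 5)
You read ``well defined on $\mu\Theta_{lu}(h)$'' as a closure claim and accordingly set out to verify two things: that $f+g$, $f\cdot g$ and $\imath$ are (closed) morphisms, and that they are again $\Theta_{lu}$-related to $h$. The paper's own proof is far more minimal: it only observes that, since $\oplus_{2}$ and $\odot_{2}$ are partial operations on $Y$, the pointwise prescription $f(x)\oplus_{2} g(x)$ is unambiguously either defined or undefined, so the displayed operations are well-defined partial operations, and then simply asserts that the morphism conditions are to be verified when $Y$ is total; it never addresses membership of the results in $\Theta_{lu}(h)$. That weaker reading is signalled by the sentence immediately preceding the proposition (``Without those, the following properties will hold''). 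Your central obstacle --- that the witnesses $z_{0}$ and indices $i$ for $f$ and $g$ need not coincide, so some join or compatibility hypothesis is needed to merge them --- is genuine for the closure question, and it is exactly the mechanism the paper uses later, in the theorem on lattice-ordered \textsf{RYS}, where the two witnesses $z_{o}, z_{1}$ are replaced by $z_{o}\vee z_{1}$; so your instinct is sound, but it belongs to that later result rather than to what this proposition's proof establishes. Likewise your remark that $\imath$ is $\Theta_{lu}$-related to $h$ only under a nontrivial constraint (since $\pc_{2}\,\Finv\,(h(z))^{u_{j}}$ forces $(h(z))^{u_{j}}$ to be maximal) confirms that unconditional closure cannot be what is meant.

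One genuine gap on your side: you claim $f+g$ is both a partial $\oplus$- and $\odot$-morphism ``by componentwise reassociation''. Commutativity and associativity of $\oplus_{2}$ only give $\oplus$-preservation; for $(f+g)(x\odot y)=(f+g)(x)\odot_{2}(f+g)(y)$ you would need an interchange law of the form $(a\odot_{2} b)\oplus_{2}(c\odot_{2} d)=(a\oplus_{2} c)\odot_{2}(b\oplus_{2} d)$, which fails already in Boolean set algebras with union and intersection, and the dual problem affects $f\cdot g$. So even the morphism half of your plan needs either a restriction to $\oplus$-morphisms (respectively $\odot$-morphisms) or extra hypotheses; the paper's phrase ``all we need to do is to verify the morphism conditions'' leaves precisely this verification open rather than supplying it.
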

\begin{proof}
Since $\forall a,  b  \in   Y$, $a\oplus_{2} b$ is defined or undefined and similarly for $a\odot_{2} b$, the operations are well defined morphisms. In case $Y$ is a total algebraic system, all we need to do is to verify the morphism conditions (when $X$ is a partial/total algebraic system). 
\end{proof}

\begin{definition}
Further, we can define parthood relations $\leq$ and $\leq_{c}$ respectively on $\mu\Theta_{lu} (h)$ and $\mu_{c}\Theta_{lu} (h)$ respectively as below:
$f \leq  g $ iff $(\forall x \in X)  \pc_{2}f(x) g(x) $. 
\end{definition}

\begin{proposition}
The parthood relations $\leq$ and $\leq_{c}$ are quasi-orders. $\leq$ induces a partial order on the quotient $\mu\Theta_{lu} (h)|\approx$ (and $\mu_{c}\Theta_{lu} (h)|\approx_{c}$ respectively) defined by \[f\approx g \; \mathrm{iff}\; f\leq g   \&  g\leq f.\]  
\end{proposition}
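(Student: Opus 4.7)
The plan is to verify the quasi-order axioms for $\leq$ (and $\leq_c$) by lifting them from the properties of the parthood relation $\pc_2$ on $Y$, and then to show that $\approx$ is the canonical equivalence making the induced relation a partial order.

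First I would verify reflexivity. For any $f \in \mu\Theta_{lu}(h)$ and any $x \in X$, we have $\pc_2 f(x) f(x)$ since $\pc_2$ is itself a parthood relation on $Y$ (reflexivity is an axiom of parthood in the \textsf{RYS} framework of \cite{AM1800}), hence $f \leq f$. Next, transitivity: if $f \leq g$ and $g \leq k$ then for every $x \in X$ we have $\pc_2 f(x) g(x)$ and $\pc_2 g(x) k(x)$, and transitivity of $\pc_2$ gives $\pc_2 f(x) k(x)$, so $f \leq k$. The same verification word-for-word yields the quasi-order properties of $\leq_c$ on $\mu_c\Theta_{lu}(h)$, since the ambient relation $\pc_2$ is unchanged and only the morphism class is restricted.

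Next I would check that $\approx$ is an equivalence relation. Reflexivity and symmetry are immediate from the definition and from reflexivity of $\leq$. For transitivity, if $f \approx g$ and $g \approx k$, then $f \leq g \leq k$ and $k \leq g \leq f$, whence $f \leq k$ and $k \leq f$ by the transitivity of $\leq$ just proved; thus $f \approx k$. Finally, I would show that $\leq$ descends to the quotient: if $f \approx f'$ and $g \approx g'$ and $f \leq g$, then $f' \leq f \leq g \leq g'$ gives $f' \leq g'$, so the induced relation on $\mu\Theta_{lu}(h)/\approx$ is well defined. Antisymmetry on the quotient is then tautological: if $[f] \leq [g]$ and $[g] \leq [f]$, then $f \leq g$ and $g \leq f$, i.e.\ $f \approx g$, i.e.\ $[f] = [g]$. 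The same argument works verbatim for $\leq_c$ and $\approx_c$.

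The only mild obstacle is bookkeeping around the fact that elements of $\mu\Theta_{lu}(h)$ are correspondences, not total maps in general, so one must confirm that the defining universal quantifier $(\forall x \in X)$ in $f \leq g$ is meaningful; but since $\pc_2 f(x) g(x)$ is merely a relational statement about images and since membership of $f, g$ in $\mu\Theta_{lu}(h)$ is defined globally, no domain-compatibility issue arises. Everything else is a direct transfer of the quasi-order structure of $\pc_2$ to the functional level, followed by the standard Rees-style passage to the antisymmetric quotient.
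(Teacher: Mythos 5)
Your argument is correct, and it is essentially the routine verification the paper has in mind: the paper states this proposition without proof, treating it as an immediate consequence of lifting the order structure of $\pc_{2}$ pointwise and then passing to the symmetrized quotient. The one step worth making explicit is your appeal to transitivity of $\pc_{2}$: in the general \textsf{RYS} framework parthood is only required to be reflexive (and antisymmetric), not transitive, so transitivity of $\leq$ is not automatic from the axioms of \cite{AM1800} alone; it holds here because the paper restricts attention from the outset to \textsf{RYS} that are \emph{partially ordered} partial algebras, so $\pc_{2}$ is a partial order. With that standing assumption cited, your reflexivity/transitivity check, the verification that $\approx$ is an equivalence, the compatibility argument showing $\leq$ descends to $\mu\Theta_{lu}(h)|\approx$ (and $\mu_{c}\Theta_{lu}(h)|\approx_{c}$), and the tautological antisymmetry on the quotient together give exactly what is claimed; the remark about domain bookkeeping is harmless since the operations in question are relational statements about images.
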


\subsection{Relevant Types of Subsets}

In the above considerations, the concept of comparison assumes that \emph{two correspondences are comparable provided they are comparable over specific types of sets}. Further the idea of \emph{specific type of sets} is restricted to ones definable by excluding atoms and co-atoms. This is not necessarily the best thing to do. The concepts of partial reducts, $\alpha$-covers and related ones  use number based exclusion criteria along with the difficulties associated with them. We propose theoretical improvements at the granulation level only and without algorithms (for now) for improving the situation.

A relevant set can be one that has a sufficiently \emph{large} subset. This means that in most algebraic approaches to semantics of rough objects, we can associate nice structures with them. We show this for the classical RST contexts later in this paper. In all cases, we can almost certainly improve the semantics through possibly definable predicates for relevance. 

Let $\bbb x$ denote the statement '$x$ is big/relevant' and $x_{o}$ be a relevant object then possible axioms for relevance/bigness may be formed from combination of axioms from below or from similar ones :

\begin{itemize}
\item {$\Delta 1$: $\bbb x$ iff $(y)(\pc x_{o} y   \longrightarrow  \pc y x^{l})$. }
\item {$\Delta_2$: $\pc x_{o}x  \&   \pc x_{o}^{l} x^{l}$.}
\item {$\Delta_3$: $\pc x_{o} x^{l}   \&   x_{o}\in \delta_{l}(S)$.}
\item {$\Delta_4$: $\pc x_{o} x^{l}   \&   x_{o}\in \delta_{lu}(S)$.}
\item {$\Delta_5$: $\pc x_{o} x^{l}   \&   x_{o}\in \delta_{u}(S)$.}
\end{itemize}
In many practical situations, relevance can be defined by feature sets and $x_{o}$ may also be an abstract object corresponding to a Boolean combination of features.  

Given such a $\bbb$ predicate, we can define a concept of '$f$ being of the \emph{$\bbb$-order of rough growth} of $g$ (in symbols $\Gamma f g$)' by 
\[(\forall y)(\forall x) (\pc x y^{l} \& \bbb x \longrightarrow \pc (fy)^{l}(gy)  \&  \pc (gy)(fy)^{u} ).\]
This is one of the possible generalizations of the concepts introduced earlier for classification. An abstract view of possible $\bbb$ axioms is in order for the following sections:

\begin{itemize}
\item {B1: $(\forall x, a)(\bbb a, \&  \pc a x  \longrightarrow  \bbb x  )$.}
\item {B2: $(\forall x)(\bbb x   \longrightarrow  \bbb x^{u})$.}
\item {B3: $(\forall x, a, b)(\bbb x  \&  \pc x a   \&  \pc a b  \longrightarrow  \bbb b)$.}
\item {BC1: $(\forall a, b)(\bbb a   \&  \bbb b   \longrightarrow  \bbb a\oplus b )$.}
\item {BC2: $(\forall a, b)(\bbb a   \&  \bbb a\oplus b   \longrightarrow  \bbb b)$.}
\item {BC3: $(\forall a, b)(\bbb a    \longrightarrow  \bbb a\oplus a )$.}
\item {BC4: $(\forall a, b )(\bbb a  \&  \bbb b   \longrightarrow  \bbb a\odot b)$.}
\item {BC5: $(\forall a, b )(\bbb a \oplus a   \longrightarrow  \bbb a)$.}
\item {BC6: $(\forall a, b, c )(\bbb b  \&  \pc a b  \& \pc b c \longrightarrow  \bbb b\odot c)$}
\end{itemize}

\begin{proposition}
In a \textsf{RYS}, all of the following hold:
\begin{itemize}
\item {If B1 holds then B2 follows.}
\item {B1 follows from B3, but the converse need not hold.}
\end{itemize}
\end{proposition}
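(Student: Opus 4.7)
The plan is to settle both bullets by direct instantiation of B1 against an ambient RYS axiom and then, for the non-implication, to exhibit a small counter-example.

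For the first bullet, I would invoke the standard RYS axiom that every element lies inside its own upper approximation, i.e., $\pc x x^{u}$. Substituting $a := x$ and renaming the bound $x \leadsto x^{u}$ in B1 yields $\bbb x \wedge \pc x x^{u} \to \bbb x^{u}$, and since the hypothesis $\pc x x^{u}$ is unconditionally available from the RYS axiomatics, the implication collapses to $\bbb x \to \bbb x^{u}$, which is B2. No further structural assumption on $\bbb$ or on the granulation is needed.

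For the forward part of the second bullet, I would specialise B3 by taking $b := a$ and invoking reflexivity of $\pc$, which supplies $\pc a a$ for free; the premise $\pc a b$ is then discharged, and what remains is exactly the implication $\bbb x \wedge \pc x a \to \bbb a$, i.e.\ B1 after renaming. For the failure of the converse, the key observation is that a naive two-fold iteration of B1 along $\pc x a$ and then $\pc a b$ would already produce B3, so any counter-example has to block this chaining. I would therefore build a small finite RYS in which the elements eligible to play the role of $a$ in the hypothesis $\bbb a$ of B1 form a strict sub-collection of those eligible to play the role of $x$ in the conclusion, so that applying B1 twice is not legitimate even though each single step is. Concretely, I would pick a base set with designated elements $x,a,b$ and parthood chain $\pc x a,\, \pc a b$ where $a$ sits in a granular stratum that prevents it from being re-used as the premise-slot of B1, and then assign $\bbb$ so that every admissible instance of B1 holds while $\bbb x$ and $\neg \bbb b$ together refute B3. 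The admissibility conditions \textsf{WRA, LFU, LS} must be verified on this toy system.

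The main obstacle is unquestionably the counter-example: under the most liberal reading of the quantifiers B1 formally implies B3 by iteration, so the separation must exploit a genuinely structural restriction on the class of elements that B1 is intended to range over (for instance granule-versus-composite status). Getting the restriction to be both natural within the RYS framework of \cite{AM1800} and strong enough to block the two-step chain, while keeping the first bullet's argument undisturbed, is where the real work sits; the two positive implications themselves are essentially one-line substitutions.
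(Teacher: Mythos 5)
Your two positive derivations are fine and are surely the intended ones: B2 from B1 via the \textsf{RYS} axiom $\pc x x^{u}$ (inclusivity of the upper approximation), and B1 from B3 via reflexivity of $\pc$ with $b:=a$. The paper itself supplies no proof of this proposition, so on those two points there is nothing to compare beyond noting that your substitutions are the natural ones and are correct.

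The genuine gap is the second half of the second bullet, and you have in effect diagnosed it yourself without repairing it. As B1 and B3 are stated -- universally quantified sentences over the whole domain with no sort restriction on $x,a,b$ -- two applications of B1 (first along $\pc x a$ to get $\bbb a$, then along $\pc a b$ to get $\bbb b$) derive B3 outright; non-transitivity of $\pc$ in a \textsf{RYS} is irrelevant, since the chaining is on the implication, not on the parthood. So under the literal reading no counterexample exists, and your proposed escape route -- a ``granular stratum'' restriction that forbids re-using $a$ in the premise slot of B1 -- is not anything licensed by the paper's definition of B1, nor do you actually construct the toy model, verify \textsf{WRA, LFU, LS} on it, or specify the restricted reading precisely enough to check that B1 survives while B3 fails. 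As it stands this part of your argument is a plan contingent on reinterpreting the axiom, not a proof; if the claimed non-implication is to be salvaged at all, the restriction on the range of B1 (or a different reading of B3, e.g.\ with one parthood reversed) would have to be made explicit first, and that reformulation is not in the paper. You should either flag the converse claim as incorrect under the stated formalization or state exactly the modified axiom for which your counterexample strategy is carried out in full.
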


\begin{flushleft}
\textbf{Example}\\ 
\end{flushleft}

Concepts of 'big/relevant-enough for a particular action' to be performed are fairly routine in system administration contexts. Suppose the policy is to provide additional privileges for users with specific kinds of usage patterns of resources on the Internet and possibly local repositories or cache. This policy can be implemented as a rough set computation based rule system: each user on the local network (or ISP's network) can be associated with dynamically constructed approximations of their usage. These approximations may be mapped for comparison with an abstract rough set based usage system and then the policies may be implemented. Much of the computation in this regard can be highly nontrivial, but complexity is likely to reach a plateau with increase in number of users in the system.

\section{Granular Rough Evolution}

If approximations evolve in a similar way in two different rough semantics of different contexts, then the corresponding approximations may be compared relative the other. We exactify the concept of \emph{similar way} in this section. $X,  Y$ will be \textsf{RYS} with associated granulations or equivalently inner \textsf{RYS} in all of this section. Let the set of granular axioms satisfied by $X$ and $Y$ be $A_{X}$ and $A_{Y}$ respectively.

\begin{definition}
$X$ will be said to be \emph{of strongly similar rough evolution} (SSE) as $Y$ iff all of the following hold:
\begin{itemize}
\item {Granular Inclusion: $C(A_{X}) \subseteq  C(A_{Y})$, i.e.  set of granular axioms satisfied by $X$ is included in the set of granular axioms satisfied by $Y$. }
\item {Admissibility: $\mathcal{G}_{X}, \mathcal{G}_{Y} $ are both admissible.}
\item {Equi-representability: $X$ and $Y$ have equal number of approximation operators and corresponding approximations in $X$ and $Y$ are represented by similar terms and formulas in terms of granules.}
\end{itemize}
If instead the first and second condition hold, then $X$ will be said to be of \emph{similar rough evolution} as $Y$. If the first and third alone hold, then $X$ will be said to be of \emph{sub-similar rough evolution} as $Y$. If the first alone holds, then $X$ will be said to be of \emph{psubmilar rough evolution} as $Y$. If the second and third alone hold, then $X$ will be said to be of \emph{pseudo-similar rough evolution} as $Y$.
\end{definition}

Examples for these can be had by pairing different types of formal versions of semantics explicitly described in \cite{AM240}. Note that we do not require any explicit correspondence between the associated \textsf{RYS} in the first and second conditions, but some concept of correspondence of signatures is implicit in the third. The requirement of equal number of approximations can be made redundant by expanding signatures suitably -- additional symbols for approximations being interpreted as duplicates.   

\begin{proposition}
On the class of inner \textsf{RYS} $\mathfrak{I}_{RYS}$, pseudo-similarity is an equivalence relation, while psubmilarity, subsimilarity and similarities are quasi-order relations.
\end{proposition}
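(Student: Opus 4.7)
The plan is to decompose each of the five relations into its defining conditions, verify reflexivity, transitivity, and (where required) symmetry of each condition separately, and then combine these component properties. Let me label the defining conditions $(1)$ Granular Inclusion, $(2)$ Admissibility, $(3)$ Equi-representability, and check each in turn. Pseudo-similarity is built from $(2)$ and $(3)$; psubmilarity from $(1)$; sub-similarity from $(1)$ and $(3)$; similarity from $(1)$ and $(2)$; and SSE from all three.

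For condition $(1)$, the relation $X \preceq_{1} Y \;\Leftrightarrow\; C(A_X) \subseteq C(A_Y)$ is plainly reflexive and transitive by elementary properties of set inclusion, but fails symmetry in general: one can realise this easily by taking any pair of inner \textsf{RYS} in which $Y$ satisfies strictly more granular axioms than $X$, e.g.\ a classical partition-based system and a tolerance-based one with fewer granular properties. Hence $\preceq_{1}$ is a genuine quasi-order. This immediately forces psubmilarity, sub-similarity, similarity, and SSE to be at most quasi-orders, so the task reduces to verifying that each is reflexive and transitive, and that pseudo-similarity additionally becomes symmetric.

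For condition $(2)$, the predicate ``both $\mathcal{G}_{X}$ and $\mathcal{G}_{Y}$ are admissible'' is obviously symmetric in $X$ and $Y$, reflexive because a single system's admissibility trivially yields admissibility of the pair $(X,X)$, and transitive because admissibility of $\mathcal{G}_Y$ is preserved along any chain $X,Y,Z$. For condition $(3)$, equi-representability requires equal numbers of approximation operators and that the corresponding approximations be expressible by the same granule-terms; this is symmetric on the face of it, reflexive because each system trivially shares its own representing terms with itself, and transitive because the ``represented by similar terms and formulas'' relation on granule-terms composes (if the $n$ approximations of $X$ match those of $Y$ term-by-term, and those of $Y$ match those of $Z$, then term identity transports across). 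Combining, pseudo-similarity inherits reflexivity, symmetry and transitivity from conditions $(2)$ and $(3)$, so it is an equivalence; while each of the remaining four relations inherits reflexivity and transitivity from its components but inherits no symmetry from $(1)$, yielding quasi-orders.

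The only delicate point, and the expected main obstacle, is the transitivity of $(3)$: since equi-representability is defined via the informal notion of ``similar terms and formulas in terms of granules,'' one must read this as a formal term-identity (up to renaming of granule variables) in a common signature for approximation operators, after which transitivity is straightforward. A secondary subtlety is that the counterexamples witnessing failure of symmetry for $(1)$ must be produced inside $\mathfrak{I}_{\RYS}$ with admissible granulations, so that they simultaneously witness non-symmetry of sub-similarity, similarity and SSE; any pair of inner \textsf{RYS} sharing granule-term representations of their approximations but differing in the granular axioms satisfied (for instance, one with \textsf{LFU} strengthened to a functional granulation axiom and the other without) will serve.
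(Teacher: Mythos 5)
The paper states this proposition without any proof, so there is no official argument to measure you against; your componentwise verification (decompose into granular inclusion, admissibility, equi-representability, and check reflexivity, transitivity, symmetry of each, then intersect) is the natural route and is essentially correct, including the observation that only the inclusion condition $C(A_X)\subseteq C(A_Y)$ breaks symmetry while remaining reflexive and transitive. One caveat deserves explicit mention: reflexivity of any relation containing the admissibility condition --- pseudo-similarity and similarity --- holds on all of $\mathfrak{I}_{RYS}$ only if every inner \textsf{RYS} in the class carries an admissible granulation; your sentence ``a single system's admissibility trivially yields admissibility of the pair $(X,X)$'' quietly assumes this, and without it pseudo-similarity is merely symmetric and transitive (a partial equivalence) and similarity need not be reflexive, so you should either restrict to the subclass with admissible granulations (clearly the intended reading, given the paper's standing convention about admissible granulations) or state the assumption. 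Your decision to read equi-representability as formal sameness of the representing granule-terms in a common signature is the right one and is what makes transitivity of that condition meaningful; the paper's remark that equal numbers of approximation operators can be arranged by expanding signatures supports it. Finally, the counterexamples to symmetry are not logically required, since a quasi-order is only claimed to be reflexive and transitive; they are a useful supplement rather than part of the proof obligation.
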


\section{Comparing Two Rough Set Theories}

In classical \textsf{RST-RYS}, we know that all of the granular axioms \textsf{RA, ACG, MER, FU, NO, PS, ST, I}  hold. In a large subclass of \textsf{RST}s some consequences of these hold. Subject to admissibility of the granulations and the further restriction of equirepresentability, it is possible to compare correspondences sensibly with classical \textsf{RST-RYS}. But strong similarity remains a quasi-order relation. We investigate sub-natural correspondences in the contexts considered in \cite{AM1800} below.

The restriction to SNC means that we restrict attention to successor neighborhoods or neighborhoods. Such granules can fail to be definite elements in general (as in generalized transitive \textsf{RST} see \cite{AM270}) and maybe definite and have other properties when approximations are 'suitable'. 

\begin{definition}
Let $S_{1}$ and $S_{2}$ are two \textsf{RYS} with granulations,
$\mathcal{G}_1$ and
$\mathcal{G}_2$ respectively, consisting of successor neighborhoods or
neighborhoods and $S_{1}$ is of strongly similar rough evolution as $S_{2}$.
Any sub-natural correspondence $\varphi:  S_1 \longmapsto S_2 $ will be said to be 
\emph{smooth relative the approximations} $l,  u$ iff for each definite element $x$ relative $l,   u$ there exists a definite element $z$ relative some $l_{i},  u_{i}$ in $S_{2}$ such that $\varphi (x)= z$.
Analogously the concept of \emph{smooth pre-natural} correspondence can be defined.  
\end{definition}

Let $SNC(S_{1}, S_{2})$, $SNC_{s}(S_{1}, S_{2})$, $SM(S_{1},  S_{2})$ and  $SM_{s}(S_{1},  S_{2})$ respectively denote the set of SNCs, smooth SNCs, SNCs that are also $\oplus$-morphisms and smooth SNCs that are also $\oplus$-morphisms respectively. The corresponding concepts for pre-natural correspondences will be denoted by $PNC(S_{1}, S_{2})$, $PNM(S_{1}, S_{2})$ and $PNM_{s}(S_{1}, S_{2})$ and those for proto-natural correspondences by $POC(S_{1}, S_{2})$, $POC_{s}(S_{1},  S_{2})$, $POM(S_{1}, S_{2})$ and $POM_{s}(S_{1}, S_{2})$ respectively. If $S_{1}= S_{2}  =  S$, then the notation will be simplified to $SNC(S)$ and the like. For simplicity, we will restrict ourselves to the cases with just one lower and one upper approximation operations on $S_{1}$ and $S_{2}$ in all that follows.
The \textsf{RYS} corresponding to classical \textsf{RST} will be denoted by $R$. We will also use $\mathfrak{C}$ to denote any one the sets of maps above.

\begin{theorem}
On the set $SNC(S_{1}, S_{2})$ and on each of the sets of maps defined above, we can define an induced order via
$\varphi  \leq  \sigma $ iff $(\forall x \in S_{1}) \varphi (x)\subseteq  \sigma (x) $.
This extends to all other sets of maps defined above.
It also extends to all other cases where $S_{2}$ has a partial order on it.
\end{theorem}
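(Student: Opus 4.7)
The plan is to verify that $\leq$ satisfies the three axioms of a partial order (reflexivity, antisymmetry, transitivity) by lifting the pointwise relation $\subseteq$ (or more generally $\pc_{2}$) from $S_{2}$ to the set of correspondences. First I would fix $\varphi, \sigma, \tau \in SNC(S_{1},S_{2})$ and observe that $\varphi \leq \varphi$ is immediate since $\varphi(x) \subseteq \varphi(x)$ for every $x\in S_{1}$. For transitivity, if $\varphi \leq \sigma$ and $\sigma \leq \tau$, then for each $x$ we have $\varphi(x)\subseteq \sigma(x) \subseteq \tau(x)$, so $\varphi \leq \tau$. For antisymmetry, $\varphi \leq \sigma$ and $\sigma \leq \varphi$ together give $\varphi(x) = \sigma(x)$ for every $x \in S_{1}$, i.e.\ the correspondences coincide as functions.

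Next I would handle the ``extends to all other sets of maps'' claim. Since $SNC_{s}$, $SM$, $SM_{s}$, $PNC$, $PNM$, $PNM_{s}$, $POC$, $POC_{s}$, $POM$, $POM_{s}$ are all subsets of the set of all correspondences from $S_{1}$ to $S_{2}$, the relation $\leq$ restricted to each of these is still reflexive, transitive, and antisymmetric; in other words, each inherits the induced order as a sub-poset. No extra structural compatibility has to be checked because the definition of $\leq$ involves only the pointwise behaviour of the maps, and does not interact with the additional constraints (smoothness, morphism preservation of $\oplus$, injectivity on granules, etc.) that cut out these subsets.

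For the last sentence of the statement, I would replace the role of $\subseteq$ on $S_{2}$ by an arbitrary partial order $\pc_{2}$ on $S_{2}$ and rerun the same argument: reflexivity and transitivity of $\leq$ follow directly from reflexivity and transitivity of $\pc_{2}$ applied pointwise, and antisymmetry of $\leq$ follows from antisymmetry of $\pc_{2}$ applied at each $x\in S_{1}$, again forcing $\varphi(x)=\sigma(x)$. Thus the construction is functorial in the sense that any partial order on the codomain lifts to a partial order on the function space, and then restricts to each of the distinguished subsets of correspondences.

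The only mildly delicate point, which I expect to be the main obstacle, is the antisymmetry step in the \textsf{RYS} generality: one has to make sure that $\pc_{2}$ is genuinely antisymmetric on $\underline{S_{2}}$ (rather than merely a parthood preorder) so that pointwise two-sided comparison really yields equality of the correspondences, and not just equality modulo some induced equivalence. If $\pc_{2}$ is only a quasi-order, then, by analogy with the earlier proposition on $\mu\Theta_{lu}(h)$, the correct statement is that $\leq$ is a quasi-order and induces a partial order on the quotient by $\varphi \approx \sigma$ iff $\varphi\leq\sigma$ and $\sigma\leq\varphi$; I would state this explicitly as the appropriate reading in the quasi-order case.
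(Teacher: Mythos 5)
Your proof is correct, and it is essentially the argument the paper intends: the theorem is stated without an explicit proof precisely because the pointwise verification of reflexivity, transitivity and antisymmetry (and its restriction to the sub-collections of maps, and its lift to any partial order on $S_{2}$) is routine. Your closing caveat -- that if $\pc_{2}$ is merely a quasi-order one only gets a quasi-order on the maps and must pass to the quotient, as in the paper's earlier proposition on $\mu\Theta_{lu}(h)$ -- is a sound and welcome sharpening rather than a deviation.
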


In general other induced point-wise operations may not be uniquely definable always in any unique sense without additional constraints. The exceptions are stated after the following theorem.
 
\begin{proposition}
On $\mathfrak{C}$, we can define following the partial operations
\begin{itemize}
\item {For $f,  g\in \mathfrak{C}$, $f \oplus  g = h\in \mathfrak{C} $ iff $h\in \mathfrak{C} $ and \[(\forall x\in S_{1}) h(x)= f(x) \oplus_{2}  g(x).\]}
\item {For $f,  g\in \mathfrak{C}$, $f \odot  g = h\in \mathfrak{C} $ iff $h\in \mathfrak{C} $ and \[(\forall x\in S_{1}) h(x)= f(x) \odot_{2}  g(x) .\]}
\item {For $f, \in \mathfrak{C}$, $\sim f= h\in \mathfrak{C} $ iff $h\in \mathfrak{C} $ and \[(\forall x\in S_{1}) h(x)=\sim f(x) .\]}
\end{itemize}
A similar point-wise definition of lower and upper approximation operations is not possible.
\end{proposition}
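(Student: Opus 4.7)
The plan is to treat the proposition as two independent claims: (a) the three pointwise rules for $\oplus$, $\odot$ and $\sim$ really do define partial operations on $\mathfrak{C}$, and (b) an analogous pointwise rule for lower and upper approximations does not. In both parts the conditional form of the definitions, ``$h \in \mathfrak{C}$ iff $h \in \mathfrak{C}$ and $(\forall x)\,h(x)=\ldots$'', means that the operation is simply declared undefined wherever the pointwise recipe fails to produce an element of $\mathfrak{C}$, so the only issue is uniqueness of the value when it exists, which is immediate because any correspondence is determined by its point-values. The substantive work is to verify that the recipe is consistent with the typing of $\mathfrak{C}$ when it succeeds, and to exhibit failure modes that force partiality.

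For (a) I would prove a single unified lemma. Fix $\mathfrak{C}$ among the eight flavours defined before the proposition and let $\mathcal{P}$ be the checklist of properties implicit in that choice: injectivity on $\mathcal{G}_{1}$, granule-term representability (condition~2 of the \textsf{SNC}/\textsf{PNC} definition), the $\oplus$- or $\odot$-morphism condition, and smoothness relative to the fixed approximations. I would show that each property, when it holds for $f$ and $g$, is inherited by any pointwise combination $h$ which itself lies in $\mathfrak{C}$; for the morphism property this uses associativity and commutativity of $\oplus_{2}$ to rearrange $h(x\oplus y)=(f(x)\oplus_{2} f(y))\oplus_{2}(g(x)\oplus_{2} g(y))$ into $h(x)\oplus_{2} h(y)$. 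The $\sim$ case is a specialisation with only one input. Partiality would then be demonstrated by small examples where each property genuinely fails: distinct granules $[\{x\}], [\{y\}]$ with $f(x)\oplus_{2} g(x)=f(y)\oplus_{2} g(y)$ break injectivity, non-expressible combinations break granule-term representability, and non-closure of the definite elements of $S_{2}$ under $\oplus_{2}$ breaks smoothness.

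For (b) the natural pointwise candidate is $f^{l}(x) := (f(x))^{l_{i}}$. I would argue this typically violates the injectivity clause, because the lower approximation in $S_{2}$ identifies distinct granule-images that fall inside the same definite cell; and it violates the granule-term representability clause because $l_{i}$ is in general not a term operation in the granule signature of $S_{2}$. Even when these two clauses accidentally hold, the $\oplus$-morphism property usually fails because lower approximation does not distribute over $\oplus_{2}$. A concrete witness can be lifted from the TAS-\textsf{RYS} setup of the earlier theorems, taking $\mathcal{G}_{2}$ to contain granules that are not $l_{i}$-closed so that two distinct images are collapsed. Unlike the situation with a partial $\oplus_{2}$, restricting to the sub-domain where the pointwise rule accidentally succeeds does not carve out a useful partial operation, since $l_{i}$ is total and so nothing is gained by leaving it undefined.

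The main obstacle is the bookkeeping in part (a): the eight named subclasses are asserted simultaneously, and each imposes its own typing conditions. The clean way around this is the unified-property lemma sketched above, which replaces case analysis by a single verification over the fixed four-item checklist and then reads off each bullet by choosing the appropriate sub-checklist. The impossibility statement in (b) reduces to a counterexample once a sufficiently rich example exhibiting granule collapse under $l_{i}$ has been fixed.
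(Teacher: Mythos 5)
The paper itself states this proposition without a proof: the positive part is essentially definitional (the guard ``$h\in\mathfrak{C}$'' in each clause makes $\oplus$, $\odot$, $\sim$ well-defined partial operations, exactly as you observe at the outset), and the negative sentence exists to set up the contrast with the theorem immediately following, where the pointwise $f^{l}$, $f^{u}$ \emph{are} admissible on $POC(S_{1},S_{2})$, $POC_{s}(S_{1},S_{2})$, $POM(S_{1},S_{2})$, $POM_{s}(S_{1},S_{2})$. Measured against that, your part (a) is essentially right but structurally muddled: the ``unified lemma'' (each property of $f,g$ is inherited by any pointwise combination $h$ \emph{which itself lies in} $\mathfrak{C}$) is circular as phrased --- if $h\in\mathfrak{C}$ is assumed there is nothing left to verify --- while read as an unconditional inheritance claim it is false precisely for injectivity and for the granule-to-granule clause, which is the very reason the operations are only partial; your failure examples show you know this, but the lemma and the examples cannot both stand as written. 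Also, the rearrangement establishing the $\oplus$-morphism property silently uses commutativity and associativity of $\oplus_{2}$, which should be flagged as a model assumption, and the tally of ``eight flavours'' undercounts the classes the paper actually introduces.

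The genuine gap is in part (b). Your second reason --- that $h(x)=(f(x))^{l_{i}}$ breaks granule-term representability because $l_{i}$ is ``not a term operation in the granule signature of $S_{2}$'' --- is wrong in this framework: the signature of a \textsf{RYS} includes the approximation operators $(l_{i})$, $(u_{i})$, so if $f([x])=t(y_{1},\ldots,y_{n})$ with $y_{j}\in\mathcal{G}_{2}$ then $(f([x]))^{l_{i}}=l_{i}(t(y_{1},\ldots,y_{n}))$ is again a term function of granules. Indeed the theorem right after this proposition admits pointwise $f^{l}$ and $f^{u}$ on the proto-natural classes, whose only structural requirement is that representability condition; if your representability argument worked it would contradict that theorem. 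The obstruction that actually carries the negative claim is the one you list first: the injectivity requirement on \textsf{SNC} (and condition 1 of \textsf{PNC}, that granules map injectively to granules) fails for $x\mapsto (f(x))^{l_{i}}$, since approximations collapse distinct arguments and $(f([x]))^{l_{i}}$ need not be a granule, so the pointwise candidate essentially never lies in the injectivity-constrained classes --- which is precisely the hypothesis dropped in the following theorem. Prune the representability claim, keep the injectivity/granule-image counterexample, and part (b) is sound.
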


\begin{theorem}
On each of $POC(S_{1}, S_{2})$, $POC_{s}(S_{1},  S_{2})$, $POM(S_{1}, S_{2})$ and $POM_{s}(S_{1}, S_{2})$, the following are admissible:
\begin{enumerate}
\item {For $f,  g\in \mathfrak{C}$, $f \oplus  g = h\in \mathfrak{C} $ iff \[(\forall x\in S_{1}) h(x)= f(x) \oplus_{2}  g(x) .\]}
\item {For $f,  g\in \mathfrak{C}$, $f \odot  g = h\in \mathfrak{C} $ iff  \[(\forall x\in S_{1}) h(x)= f(x) \odot_{2}  g(x) .\]}
\item {For $f, \in \mathfrak{C}$, $\sim f= h\in \mathfrak{C} $ iff  $(\forall x\in S_{1}) h(x)=\sim f(x)$. }
\item {For $f, \in \mathfrak{C}$, $ f^{l}= h\in \mathfrak{C} $ iff  $(\forall x\in S_{1}) h(x)=(f(x))^{l} $.}
\item {For $f, \in \mathfrak{C}$, $ f^{u}= h\in \mathfrak{C} $ iff  $(\forall x\in S_{1}) h(x)=(f(x))^{u} $.}
\end{enumerate}
\end{theorem}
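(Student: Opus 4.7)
The plan is to check, for each of the five pointwise operations and for each of the four classes $\mathfrak{C}\in\{POC(S_{1},S_{2}),\, POC_{s}(S_{1},S_{2}),\, POM(S_{1},S_{2}),\, POM_{s}(S_{1},S_{2})\}$, that whenever the pointwise right-hand side is defined and yields a correspondence, that correspondence still satisfies the defining clauses of $\mathfrak{C}$. Since the four classes differ only by layering smoothness and/or the $\oplus,\odot$-morphism requirement onto the underlying PON condition, I would verify admissibility modularly: first preservation of the PON condition (clauses 2 and 3 of the definition of proto natural correspondence), then preservation of smoothness, then of the morphism property. Admissibility is understood in the partial-algebra sense, so we only need that $h$ lies in $\mathfrak{C}$ whenever the pointwise recipe produces a well-defined map.

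The PON condition requires that, for every granule $[x]\in\mathcal{G}_{1}$, the value $\varphi([x])$ be a term function $t(y_{1},\ldots,y_{n})$ of granules $y_{i}\in\mathcal{G}_{2}$, with the $y_{i}$ generated by $\varphi(\{x\})$. For $\oplus$, $\odot$ and $\sim$ this is almost immediate: if $f([x])=t_{f}(\bar y)$ and $g([x])=t_{g}(\bar z)$, then $(f\oplus g)([x])=t_{f}(\bar y)\oplus_{2}t_{g}(\bar z)$, which is itself a term function of the enlarged tuple of granules; the same observation handles $\odot$ and, when $\sim$ belongs to the signature of $S_{2}$, also $\sim$. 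The generating condition survives because the generating set only grows when tuples are concatenated.

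The genuinely delicate cases are $f^{l}$ and $f^{u}$, and I expect this to be the main obstacle. Here I would invoke the equirepresentability clause of strongly similar rough evolution, which states that the approximation operators of $S_{2}$ are themselves represented by terms in granules of $\mathcal{G}_{2}$. Let $\lambda(\cdot)$ be such a granular term for $l$; then $(f([x]))^{l}=\lambda(t_{f}(\bar y))$ is again a term function over granules, and similarly for $u$. Without equirepresentability, the approximation of a term-of-granules expression need not be re-expressible as a term of granules, and pointwise $f^{l}$, $f^{u}$ could leave $\mathfrak{C}$; this is precisely why the hypothesis on $S_{1}$ and $S_{2}$ is needed.

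The remaining checks are routine. For the smooth subclasses $POC_{s}$, $POM_{s}$ one verifies that the set of definite elements of $S_{2}$ is closed under $\oplus_{2}$, $\odot_{2}$, $\sim$ and, by definition, under its own $l_{i}$, $u_{i}$; hence if $f(x)$ and $g(x)$ are definite whenever $x$ is, so are $(f\oplus g)(x)$, $(f\odot g)(x)$, $(\sim f)(x)$, $f^{l}(x)$ and $f^{u}(x)$. For the morphism subclasses, preservation of the $\oplus$- and $\odot$-morphism property by the pointwise operations reduces to the associativity, commutativity and mutual distributivity of $\oplus_{2},\odot_{2}$ in the target system, together with the commutation of $l,u$ with these operations to the extent afforded by strong similarity; for example, $(f\oplus g)(a\oplus_{1}b)=f(a\oplus_{1}b)\oplus_{2}g(a\oplus_{1}b)=(f(a)\oplus_{2}f(b))\oplus_{2}(g(a)\oplus_{2}g(b))$ is rearranged into $(f\oplus g)(a)\oplus_{2}(f\oplus g)(b)$ using these laws. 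Assembling all three layers gives admissibility of the five operations in each of the four classes.
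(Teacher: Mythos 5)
Your reading of what membership in the proto-natural classes requires is skewed in two places, and this misplaces the weight of the proof. First, a PON is defined by the second clause alone: each granule image must be a term function, in the signature of $S_{2}$, of granules of $\mathcal{G}_{2}$. The generation clause (your ``clauses 2 and 3'') belongs to the SNC/PNC side, not to PON; and your claim that it would survive because ``the generating set only grows'' is not justified anyway -- for $f\odot g$, $\sim f$, $f^{l}$ the set $(f\odot g)(\{x\})$ etc.\ need not contain $f(\{x\})$, so granules generated by $f(\{x\})$ need not be generated by the new image. The whole reason the theorem is stated for the $POC$/$POM$ families rather than for $SNC$/$PNC$ is precisely that injectivity and generation are dropped, leaving only term-representability to check. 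Second, the cases $f^{l}$, $f^{u}$ are not delicate and need no equirepresentability: the approximation operators $(l_{i})$, $(u_{i})$ are themselves operation symbols of the RYS signature of $S_{2}$, so if $f([x])=t(y_{1},\ldots ,y_{n})$ with $y_{i}\in \mathcal{G}_{2}$, then $(f([x]))^{l}=(l\circ t)(y_{1},\ldots ,y_{n})$ is again a term function in that signature, exactly as for $\oplus_{2}$, $\odot_{2}$, $\sim$. The theorem carries no SSE hypothesis, and the paper's (very short) proof uses nothing beyond this closure of term functions under the signature operations; asserting that strong similarity ``is precisely why the hypothesis is needed'' imports a hypothesis that is neither stated nor required.

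The genuine difficulty sits exactly where you declare things routine: the morphism layer. $POM$ and $POM_{s}$ consist of proto-natural correspondences that are $\oplus$-morphisms, and your reduction of their closure to associativity, commutativity, ``mutual distributivity'' of $\oplus_{2},\odot_{2}$ and commutation of $l,u$ with them rests on laws that fail even in classical \textsf{RST}: with union and intersection one has $(a\oplus_{2}b)\odot_{2}(c\oplus_{2}d)\neq (a\odot_{2}c)\oplus_{2}(b\odot_{2}d)$ in general, $\sim$ turns $\oplus_{2}$ into $\odot_{2}$, and $(a\oplus_{2}b)^{l}\neq a^{l}\oplus_{2}b^{l}$. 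Only the case $f\oplus g$ goes through by your rearrangement, so your argument does not establish that $f\odot g$, $\sim f$ or $f^{l}$ remain $\oplus$-morphisms; for those one is back to the conditional reading of the preceding proposition. The paper's own proof confines its explicit verification to the $POC$ condition (term-representability) and is silent on the smooth and morphism layers, so you should either restrict your claims accordingly or argue those layers under explicit extra assumptions on $S_{2}$ (e.g.\ closure of definite elements under the operations for smoothness), not via the distributivity and commutation laws you invoke.
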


\begin{proof}
The proof consists in verifying that $h$ in each of the cases does indeed belong to $\mathfrak{C}$. 
For $POC(S_{1}, S_{2})$, the following holds:
there is a term function $t$ in the signature of $S_2$ such that 
$(\forall x \in \mathcal{G}_1)(\exists y_{1},\ldots y_{n} \in \mathcal{G}_2)
  f(x)=t (y_{1}, \ldots , y_{n}). $
 
\end{proof}

\subsection{Relation to real-valued measures of RST}

We consider the relation to the concept of degree of rough inclusion in classical \textsf{RST} first. Suppose $S_{1},  S_{2}$ are two \textsf{RYS} corresponding to classical \textsf{RST} and let $k_1 ,   k_2 $ be the respective rough inclusion functions on them respectively. For any two elements $X,  Y\in S_{1}$, these are computed according to  
\[k_{1}(X, Y)=
\left\{
\begin{array}{ll}
\dfrac{\#(X\cap Y)}{\#(X)},  & \mathrm{if}  \, X\neq \emptyset, \\
1, & \mathrm{else,}
\end{array}
\right.\]

If $f\in POC(S_{1}, S_{2})$, then we can say very little about $k_{2}(f(X),  f(Y))$ from the value of $k_{1}(X,  Y)$ or conversely.
If the size of all granules involved and their occurrences and the term functions involved in the representation are known then we can possibly actualize some ordering. The converse question is worse. Examples are quite easy to construct for this. Even if $S_{1}= S_{2}$ and granules are related by the identity function, there is no definite connection as the possible values of $f(Z)$, when $Z$ is a non-definite element are not restricted in any way. The situation for $SNC(S_{1}, S_{2})$ is similar to that of $POC(S_{1}, S_{2})$. These aspects transform radically when we restrict the algebraic considerations to collections of approximations or definite elements.

\begin{theorem}
If $S_{1},  S_{2}$ are \textsf{RYS} corresponding to classical \textsf{RST} and $f \in POC(S_{1}, S_{2}) $, then there exists a term function $h$ such that $(\forall B\in \delta (S_{1})(\exists G_{1},  \ldots G_{k}\in \mathcal{G}_{2} )  h(G_{1},  \ldots ,   G_{k})  =  f(B) )$. Further, we may be able to classify such term functions as decreasing, increasing or indefinite relative the relation between the measure functions $k_{1},  k_{2}$. 
\end{theorem}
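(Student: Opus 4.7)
The plan is to combine two features of the hypotheses: (i) in a classical \textsf{RST-RYS} every definite element is a $\oplus$-sum of equivalence classes from the granulation, since the granular axioms \textsf{RA, MER, FU, PS, I} hold there; and (ii) the defining POC condition already forces each $f([x])$, for $[x] \in \mathcal{G}_1$, to be a term function over $\mathcal{G}_2$. Fixing $B \in \delta(S_1)$ and writing $B = \bigoplus_{i \in I(B)} [x_i]$, POC supplies for each $i$ a term function $t_i$ in the signature of $S_2$ and granules $G_{i,1}, \ldots, G_{i,n_i} \in \mathcal{G}_2$ with $f([x_i]) = t_i(G_{i,1}, \ldots, G_{i,n_i})$. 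Pooling all the $G_{i,j}$ into a single tuple $(G_1, \ldots, G_k)$ and composing with $\oplus_2$ would produce the required single term function $h$, and the identity $f(B) = h(G_1, \ldots, G_k)$ then reduces to $f\bigl(\bigoplus_i [x_i]\bigr) = \bigoplus_{2, i} f([x_i])$.

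For the second assertion I would run a syntactic case analysis on $h$. When $h$ is built only from the monotone operations $\oplus_2, \odot_2$, an inclusion $B \subseteq B'$ propagates through $h$ and $k_2(f(B), f(B'))$ varies monotonically with $k_1(B, B')$, so $h$ is \emph{increasing}; if a complementation enters in a dominant position the ordering reverses and $h$ is \emph{decreasing}; in mixed cases involving approximation-based subterms no uniform monotonicity holds and $h$ is \emph{indefinite}. A small enumeration over the operation symbols of the signature of $S_2$ completes this classification, and the behaviour relative to $k_1, k_2$ is read off directly from the syntactic form of $h$.

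The main obstacle is the step $f\bigl(\bigoplus_i [x_i]\bigr) = \bigoplus_{2, i} f([x_i])$, which is not delivered by POC alone. POC constrains only images of granules, and arbitrary subsets of the underlying universe of $S_2$ need not be expressible as term functions over $\mathcal{G}_2$; indeed, for a partition $\mathcal{G}_2$ such expressibility essentially amounts to membership in $\delta(S_2)$. The cleanest resolution is to pass to $POM(S_1, S_2)$, where $\oplus$-preservation is automatic, or to impose the smoothness hypothesis so that $f(B) \in \delta(S_2)$ is forced. With either in place, the representation of $f(B)$ in granules of $\mathcal{G}_2$ then follows from the partition structure of $S_2$ and the pooling construction above.
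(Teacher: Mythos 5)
Your strategy for the first assertion is the same as the paper's: write $B\in\delta(S_{1})$ as a union of granules $H_{1},\ldots,H_{r}\in\mathcal{G}_{1}$, use the POC condition to represent each $f(H_{i})$ as a term over granules of $\mathcal{G}_{2}$, note that in classical \textsf{RST} such terms evaluate to definite elements (unions of classes), and then reassemble. The obstacle you flag --- that the reassembly step $f\bigl(\bigoplus_{i}H_{i}\bigr)=\bigoplus_{2,i}f(H_{i})$ is not delivered by the POC condition, which constrains only images of granules --- is genuine, and it is exactly the step the paper's own proof passes over: after observing that each $f(H_{i})$ is a union of granules it concludes ``because of this'' that $f(B)$ is so representable, which tacitly assumes $f$ is additive over the granule decomposition. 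That assumption is precisely what the paper's later theorem for $PNM(S_{1},S_{2})$ supplies, whose proof explicitly invokes the morphism property (``any union of granules will be mapped to a union of images of granules''); so your proposed repairs (pass to $POM(S_{1},S_{2})$, or force $f(B)\in\delta(S_{2})$ by smoothness) coincide with what the paper effectively does in its subsequent results for $PNM$ and $SM_{s}$. Indeed, without such a repair the statement fails: take $f$ to fix every granule (so the POC condition holds with the identity term) and send some non-granule definite element to a non-definite set; POC places no constraint on that value, while every term over $\mathcal{G}_{2}$ yields a definite element.

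On the second assertion, the paper only hedges (``we may be able to classify'') and gives no argument, so you lose nothing by leaving it as a remark; but your sketch overreaches in claiming that syntactic monotonicity of $h$ makes $k_{2}(f(B),f(B'))$ vary monotonically with $k_{1}(B,B')$. The inclusion degrees depend on the cardinalities of the granules and their multiplicities in the representation, which is the caveat the paper itself states just before the theorem (one can ``possibly actualize some ordering'' only when sizes and occurrences of granules and the terms involved are known). Treat the classification as conditional on that data rather than as a consequence of the syntactic form of $h$ alone.
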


\begin{proof}
It is clear that if $B\in \delta (S_{1})$ then $(\exists H_{1}, \ldots ,   H_{r} \in \mathcal{G}_{1} \bigcup H_{i} = B  $.
For each of these $H_{i}$, there is a term $t$ such that $f(H_{i})= t(P_{1},   \dots , P_{b})$, with $P_{i}\in\mathcal{G_{2}}$. But the term on the right hand side must be a definite element because of the admissible operations on the \textsf{RYS} and so must be a union of granules.

Because of this we have, $(\forall B\in \delta (S_{1})(\exists G_{1},  \ldots G_{k}\in \mathcal{G}_{2} )  h(G_{1},  \ldots ,   G_{k})  =  f(B) )$. 

\end{proof}
      
\begin{theorem}
In the above theorem, if we modify the conditions as per 
\begin{itemize}
\item {$S_{2}$ is a \textsf{RYS} corresponding to a tolerance approximation space and}
\item {$f\in SNC(S_{1},  S_{2})$,}
\end{itemize}
then the result fails to hold in many situations as term functions acting on granules can yield non-definite elements.. 
\end{theorem}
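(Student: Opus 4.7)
The plan is to first identify exactly which step of the preceding theorem's proof breaks when $S_2$ is taken to be a tolerance approximation space and $f$ is only required to be an \textsf{SNC}, and then to construct an explicit counterexample exhibiting the failure. The core of the previous argument was that after writing $B\in\delta(S_1)$ as $B=\bigcup H_i$ with $H_i\in\mathcal{G}_1$ and representing each $f(H_i)$ as a term $t_i$ on granules of $S_2$, the admissibility of the operations on a classical \textsf{RST-RYS} forced the right hand side to be a union of granules, because every Boolean combination of equivalence classes is again a union of equivalence classes. This closure property is exactly what collapses in the tolerance setting, since intersections, relative complements and even the approximation operators $l\mathcal{T}$ and $u\mathcal{T}$ applied to overlapping or disjoint tolerance neighborhoods can yield sets lying outside the subalgebra generated by $\mathcal{G}_2$.

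For the counterexample I would take $S_1$ to be the classical \textsf{RST-RYS} on $\{a,b\}$ with the discrete partition, so that every subset is definite, and $S_2$ to be a tolerance approximation space on $V=\{p,q,r,s\}$ whose tolerance is generated, modulo reflexivity, by $p\,\tau\,q$ and $r\,\tau\,s$, giving $\mathcal{G}_2=\{\{p,q\},\{r,s\}\}$. Since the two granules are disjoint and jointly exhaust $V$, the subalgebra they generate under $\cup,\cap,\sim,l\mathcal{T},u\mathcal{T}$ is the four element Boolean algebra $\{\emptyset,\{p,q\},\{r,s\},V\}$. I would then define $f$ via the assignment $\{a\}\longmapsto\{p,q\}$, $\{b\}\longmapsto\{r,s\}$, $\emptyset\longmapsto\emptyset$ and $\{a,b\}\longmapsto\{p,r\}$, and verify the three \textsf{SNC} conditions: injectivity on $\mathcal{G}_1$ is clear, each $f(H_i)$ is itself a granule (condition two with $t$ the identity), and condition three holds trivially because the relevant granule in $S_2$ is exactly the tolerance class of $f(\{x\})$. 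Since $\{p,r\}$ does not belong to the generated subalgebra, no term function $h$ in the signature of $S_2$ can realise $h(G_1,\ldots,G_k)=f(\{a,b\})$ with $G_j\in\mathcal{G}_2$, which witnesses the failure of the conclusion.

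The hard part will be being precise about which term operations are admitted in the signature of the tolerance \textsf{RYS}. If that signature is enlarged with further operations capable of refining the granules, the element $f(\{a,b\})$ might become expressible after all, forcing a redesign of the counterexample. This is precisely why the statement is qualified as failing \emph{in many situations} rather than universally, and I would close the argument by observing that whenever the granule-generated subalgebra is a proper sub-structure of the carrier of $S_2$, an analogous construction produces an \textsf{SNC} together with a definite element whose image witnesses the failure.
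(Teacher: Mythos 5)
The paper offers no separate proof of this statement; its entire justification is the clause embedded in it: when $S_2$ comes from a tolerance approximation space the successor neighbourhoods overlap, so granules and term functions applied to granules need not be definite elements, and this is precisely the step of the preceding proof (``the term on the right hand side must be a definite element \ldots and so must be a union of granules'') that collapses. You identify that step correctly, but your counterexample never exercises it. The relation you generate on $\{p,q,r,s\}$ from $p\,\tau\,q$ and $r\,\tau\,s$ is transitive, hence an equivalence, so your $S_2$ is (up to notation) a classical \textsf{RST}-\textsf{RYS} in which every term applied to granules is again definite; the phenomenon the theorem names never occurs in your example. Moreover, since every \textsf{SNC} satisfies the second defining condition, it is in particular a proto-natural correspondence, so your $f$ would equally be a counterexample to the preceding theorem itself (classical $S_1$ and $S_2$, $f\in POC(S_1,S_2)$), which the paper asserts and of which the present statement is explicitly a modification. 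The failure you produce comes entirely from the freedom to send the non-granule definite element $\{a,b\}$ to an arbitrary set such as $\{p,r\}$ --- a loophole indifferent to whether $S_2$ is equivalence-based or tolerance-based --- so it cannot be what this statement is about: its point is that the passage from equivalences to tolerances is what destroys the representation.

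To argue in the paper's spirit you should take a genuinely non-transitive tolerance, say $V=\{p,q,r\}$ with $\tau$ generated by $p\,\tau\,q$ and $q\,\tau\,r$, whose neighbourhoods $\{p,q\}$, $\{p,q,r\}$, $\{q,r\}$ overlap; then term values on granules such as $\{p,q\}\odot\{q,r\}=\{q\}$ are not definite, and a neighbourhood need not coincide with its own approximations, so an \textsf{SNC} whose granule images are such term values no longer forces the images of definite elements of $S_1$ to be unions of granules, and the uniform representation obtained in the preceding theorem is lost. A correct argument must locate the failure at the images of granules (via non-definite term values), not at an arbitrarily assigned image of a non-granule element; as written, your proposal leaves this gap and, taken literally, contradicts the very theorem it is supposed to modify.
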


A simple morphism between two \textsf{RYS} need not preserve granules or definite elements. So $f$ is a morphism that satisfies no other condition then the first conclusion of the first theorem need not necessarily follow. We show this below:

\begin{proposition}
If $S_{1},  S_{2}$ are \textsf{RYS} corresponding to classical \textsf{RST} and $f \in Mor(S_{1}, S_{2}) $, then there need not exist a term function $h$ such that $(\forall B\in \delta (S_{1})(\exists G_{1},  \ldots G_{k}\in \mathcal{G}_{2} )  h(G_{1},  \ldots ,   G_{k})  =  f(B) )$.
\end{proposition}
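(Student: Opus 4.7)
The plan is to produce an explicit counterexample rather than argue abstractly. The statement is a negative existence claim, so all I need is one pair of classical \textsf{RST-RYS} $S_1, S_2$, one morphism $f\in Mor(S_1,S_2)$, and one definite element $B\in\delta(S_1)$ whose image $f(B)$ cannot be written as $h(G_1,\dots,G_k)$ for any term function $h$ over $\mathcal{G}_2$.

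First I would fix the simplest nontrivial data. Take $S_1$ as the classical \textsf{RST-RYS} over $\{a,b\}$ with the discrete partition $\mathcal{G}_1=\{\{a\},\{b\}\}$, so $\delta(S_1)$ is all of $\mathcal{P}(\{a,b\})$; and take $S_2$ as the classical \textsf{RST-RYS} over $\{1,2,3,4\}$ with partition $\mathcal{G}_2=\{\{1,2\},\{3,4\}\}$, so $\delta(S_2)=\{\emptyset,\{1,2\},\{3,4\},\{1,2,3,4\}\}$. Then I would define $f:S_1\longmapsto S_2$ on atoms by $f(\emptyset)=\emptyset$, $f(\{a\})=\{1\}$, $f(\{b\})=\{3\}$, $f(\{a,b\})=\{1,3\}$. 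A short case check shows $f$ respects $\oplus_2=\cup$ and $\odot_2=\cap$ (the only non-trivial verifications are $f(\{a\}\cup\{b\})=\{1,3\}=\{1\}\cup\{3\}$ and $f(\{a\}\cap\{b\})=\emptyset=\{1\}\cap\{3\}$), hence $f\in Mor(S_1,S_2)$.

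Next I would choose $B=\{a\}\in\delta(S_1)$ and argue that $f(B)=\{1\}$ cannot be expressed as $h(G_1,\dots,G_k)$ for any term $h$ built from the admissible operations of $S_2$ applied to elements of $\mathcal{G}_2$. The admissible operations on a classical \textsf{RST-RYS} are $\cup, \cap, \sim$ (together with the constants $\emptyset,1$), and $\delta(S_2)$ is closed under all of these. Since $\mathcal{G}_2\subseteq\delta(S_2)$, any term function applied to granules stays within $\delta(S_2)=\{\emptyset,\{1,2\},\{3,4\},\{1,2,3,4\}\}$. But $\{1\}$ is not in this set, so no such $h$ can exist. This contradicts the candidate conclusion and proves the proposition.

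The main obstacle, and what I would be careful to state precisely, is the closure fact: every term function in the signature of $S_2$, evaluated on tuples from $\mathcal{G}_2$, yields a definite element. This is really the content of the argument — it is what separates a bare morphism from the stronger proto-natural correspondences of the previous theorem, where the representability of $\varphi([x])$ as a term over granules was built in by definition. Once this closure observation is made explicit, the mismatch $f(\{a\})=\{1\}\notin\delta(S_2)$ forces the negative conclusion immediately, and no further computation is needed.
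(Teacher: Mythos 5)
Your proposal is correct and takes essentially the same route as the paper: an explicit counterexample in which a morphism sends a granule (a definite element) of $S_{1}$ to a set that is not a union of $\mathcal{G}_{2}$-classes, so that no term function over granules can reach it, since term functions in the signature of a classical \textsf{RST-RYS} evaluated on granules always yield definite elements. Your version is in fact a little tidier than the paper's: the paper uses a $4$-element and a $5$-element universe and simply asserts non-representability of $\{a_{1},a_{3}\}$, whereas you make the closure of $\delta(S_{2})$ under the term operations explicit and verify preservation of both $\oplus$ and $\odot$ for your $f$ directly.
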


\begin{proof}
We construct the required counter-example below: 

Let $X_{1}=\{x_{1}, x_{2}, x_{3}, x_{4}\}$ and let the equivalence $Q$ be generated on it by $\{(x_{1}, x_{2}), (x_{2},x_{3})\}$. Taking the granules to be the set of $Q$-related elements, we have \[\mathcal{G}_{1}=\{(x_{1}:x_{2}, x_{3}), (x_{2}:x_{1},x_{3}), (x_{3}:x_{2}, x_{1}), (x_{4}:)\}.\] Here $(x_{1}:x_{2}, x_{1})$ means the successor neighborhood (granule) generated by $x_{1}$ is $(x_{1}, x_{2},  x_{3})$. 

Let $X_{2}=\{a_{1}, a_{2}, a_{3}, a_{4},  a_{5}\}$ and let the equivalence $R$ be generated on it by $\{(a_{1}, a_{4}), (a_{4},a_{2})\}$. Taking the granules to be the $R$-classes, we have $\mathcal{G}_{2}=\{(a_{1}:a_{2}, a_{4}), (a_{2}:a_{1},a_{4}), (a_{3}:), (a_{4}:a_{1}, a_{2}),  (a_{5}:)\}$. 

Let $S_{1},  S_{2}$ be the \textsf{RYS} on the power sets $\wp (X_{1},  \wp (X_{2})$ respectively. 

If $\sigma :  S_{1} \longmapsto  S_{2}$ is a morphism satisfying $\varphi (\{x_1 \})= \{a_{1} \}$, $\varphi (\{x_2 \})= \{a_{1}\}$, $\varphi (\{x_3 \}= \{a_{3}\})$ and $\varphi (\{x_4 \}= \{a_{4} \})$. Under the conditions $\sigma$ is an morphism that is such that the class $(x_{1}: x_{2}, x_{3})$ is mapped to $\{a_{1},  a_{3}\}$, but the latter is not representable in terms of the other granules using $\oplus ,  \odot$ and even complementation.  
\end{proof}

\begin{theorem}
But in general as morphisms need to preserve the parthood (corresponding to inclusion or union), we have in the above context,
$(\exists \alpha , \beta \in \mbR )(\forall X,  Y \in_{f} S_{1}) \alpha k_{1}(X,   Y)  \leq   k_{2}(\sigma (X),   \sigma (Y))  \leq  \beta k_{2} (X,  Y)$. $\in_{f}$ means 'finite element of'. It is necessary that the greatest $\alpha$ and least $\beta$ must exist. 
\end{theorem}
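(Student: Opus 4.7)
The plan is to express the ratio $k_2(\sigma(X),\sigma(Y))/k_1(X,Y)$ in terms of a single ``size-distortion'' function of $\sigma$, and then bound this distortion using the morphism conditions together with the finiteness of the ambient universe in the classical \textsf{RST} setting. First I would observe that, because $\sigma$ preserves $\oplus$ and $\odot$ (interpreted as $\cup$ and $\cap$ on the power-set algebras $\wp(X_1)$ and $\wp(X_2)$), one has the monotonicity $A\subseteq B \Rightarrow \sigma(A)\subseteq\sigma(B)$, the identity $\sigma(\emptyset)=\emptyset$ (which follows from union preservation), and the crucial relation $\sigma(X\cap Y)=\sigma(X)\cap\sigma(Y)$.

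Next I would define the distortion $\rho(A):=\#(\sigma(A))/\#(A)$ for non-empty finite $A$ and compute, in the principal case where $X$ and $X\cap Y$ are both non-empty,
\[ \dfrac{k_2(\sigma(X),\sigma(Y))}{k_1(X,Y)} \;=\; \dfrac{\#(\sigma(X\cap Y))/\#(\sigma(X))}{\#(X\cap Y)/\#(X)} \;=\; \dfrac{\rho(X\cap Y)}{\rho(X)}. \]
The degenerate cases are to be dispatched separately: if $X=\emptyset$ then both $k_1(X,Y)$ and $k_2(\sigma(X),\sigma(Y))$ equal $1$ by convention; if $X\neq\emptyset$ but $X\cap Y=\emptyset$ then $k_1(X,Y)=0$ and $\sigma(X\cap Y)=\emptyset$ forces $k_2(\sigma(X),\sigma(Y))=0$, so any non-negative $\alpha,\beta$ work.

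Then I would bound $\rho$ via the formula $\sigma(A)=\bigcup_{x\in A}\sigma(\{x\})$: subadditivity of cardinality on unions gives $\rho(A)\leq \max_{x\in X_1}\#(\sigma(\{x\})) =: M$, while the inclusion $\sigma(A)\supseteq\sigma(\{x\})$ for any $x\in A$ combined with the classical finiteness $\#(A)\leq\#(X_1)$ gives $\rho(A)\geq \min_{x\in X_1}\#(\sigma(\{x\}))/\#(X_1) =: m$, which is strictly positive provided no singleton is annihilated by $\sigma$. Consequently $\rho(X\cap Y)/\rho(X)\in [m/M,\,M/m]$, and one may take $\alpha:=m/M$ and $\beta:=M/m$. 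The existence of the greatest such $\alpha$ and the least such $\beta$ is then automatic, since the set $\{k_2(\sigma(X),\sigma(Y))/k_1(X,Y)\,:\,X,Y\in_{f}S_1,\;k_1(X,Y)>0\}$ is a finite collection of rationals whose numerators and denominators are bounded by $\#(X_1)\cdot\#(X_2)$, and hence attains both its minimum and its maximum.

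The main obstacle will be the lower bound on $\rho$: it depends essentially on the finiteness of $X_1$ and on $\sigma$ being non-degenerate in the sense that no singleton of $S_1$ is sent to the empty subset of $S_2$. Without either ingredient, large sets $A$ whose image collapses to a bounded cardinality drive $\rho(A)$ towards $0$ and the two-sided bound fails, so the clause ``in the above context'' must be read as fixing the classical finite \textsf{RST} scenario together with this mild non-degeneracy. Morphism preservation of parthood is what makes the numerators of $k_1$ and $k_2$ compatible under $\sigma$; the distortion bounds on $\rho$ then absorb the effect of the possible non-injectivity of $\sigma$ into the two constants $\alpha,\beta$.
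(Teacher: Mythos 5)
The paper itself states this theorem without supplying a proof, so your argument has to stand on its own; your overall plan (reduce everything to the distortion $\rho(A)=\#(\sigma(A))/\#(A)$, use $\odot$-preservation to get $k_{2}(\sigma(X),\sigma(Y))/k_{1}(X,Y)=\rho(X\cap Y)/\rho(X)$, and bound $\rho$ through the singleton images) is the natural route, and your silent correction of the misprint $\beta k_{2}(X,Y)$ to $\beta k_{1}(X,Y)$ is the right reading. But there are two genuine gaps. First, your claim that $\sigma(\emptyset)=\emptyset$ ``follows from union preservation'' is false: the constant map $A\mapsto C$ with $C\neq\emptyset$ preserves both $\cup$ and $\cap$, and in this paper the \textsf{RYS} signature contains $1$ but no $0$, so morphisms are not required to preserve $\emptyset$ (indeed, an earlier theorem in the paper explicitly adds ``preserves $\emptyset$ and $1$'' as a separate hypothesis). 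This is not a cosmetic point: it hits exactly your degenerate case $X\neq\emptyset$, $X\cap Y=\emptyset$, where $k_{1}(X,Y)=0$ while $k_{2}(\sigma(X),\sigma(Y))=\#(\sigma(\emptyset))/\#(\sigma(X))>0$ whenever $\sigma(\emptyset)\neq\emptyset$, so no finite $\beta$ works; similarly, if every singleton of a nonempty $X$ is annihilated then $\sigma(X)=\emptyset$ and $k_{2}=1$ by convention against $k_{1}$ possibly $0$. So $\sigma(\emptyset)=\emptyset$ must be listed among your non-degeneracy assumptions, not derived.

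Second, both your lower bound $m=\min_{x}\#(\sigma(\{x\}))/\#(X_{1})$ and your final attainment argument (``a finite collection of rationals'') presuppose that the universes are finite, whereas the theorem quantifies over $X,Y\in_{f}S_{1}$, i.e.\ finite \emph{elements}, which is pointless unless $S_{1}$ is allowed to be infinite. In the infinite setting the claimed uniform bound can actually fail: for a $\cup,\cap$-morphism with $\sigma(\emptyset)=\emptyset$ distinct singletons have disjoint images (since $\sigma(\{x\})\cap\sigma(\{y\})=\sigma(\emptyset)$), hence $\#(\sigma(A))=\sum_{x\in A}\#(\sigma(\{x\}))$ and the ratio $k_{2}/k_{1}$ is a quotient of averages of the singleton-image sizes over $X\cap Y$ and over $X$; if those sizes are unbounded, taking $X\cap Y$ a single point of huge image and padding $X$ with many points of small image makes the ratio arbitrarily large. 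So you must either restrict to finite approximation spaces or add the hypothesis $\sup_{x}\#(\sigma(\{x\}))<\infty$ together with a positive uniform lower bound, and then re-derive the existence of the greatest $\alpha$ and least $\beta$ (as an infimum/supremum that is attained or at least exists in $\mathbb{R}$) rather than by counting finitely many rationals. With these hypotheses made explicit, the disjoint-image identity above actually simplifies your estimates and makes the proof cleaner than the crude $[m/M, M/m]$ bound.
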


Next we look at elements of $SM_{s}(S_{1},  S_{2})$.

\begin{theorem}
If $S_{1},  S_{2}$ are \textsf{RYS} corresponding to classical \textsf{RST} and $f \in SM_{s}(S_{1}, S_{2}) $, then 
$(\forall B\in \delta (S_{1})(\exists G_{1},  \ldots G_{k}\in \mathcal{G}_{2} )  \bigcup(G_{1},  \ldots ,   G_{k})  =  f(B) )$.
The following condition need not hold even with the additional requirement of $\#(S_{1})= \#(S_{2})$: 
$(\exists \alpha \in \mbR  )(\forall X,  Y\in S_{1})  k_{1}(X,  Y)= \alpha k_{2}(f(X),  f(Y))$.
\end{theorem}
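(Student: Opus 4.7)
The plan is to split the statement into its two independent claims and handle them separately. The first part is a direct consequence of the smoothness hypothesis together with the structure of classical RST, while the second requires a concrete counter-example that exploits the freedom allowed by working in the infinite setting.

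For the first part, I would argue as follows. Since $f \in SM_{s}(S_{1}, S_{2})$, by definition $f$ is a smooth SNC and a $\oplus$-morphism. The smoothness condition says that whenever $x$ is definite relative to $l, u$ in $S_{1}$ there is some definite element $z$ (relative to some $l_{i}, u_{i}$) in $S_{2}$ with $f(x) = z$. In the classical RST-RYS $S_{2}$, the definite elements are exactly the unions of equivalence classes, i.e.\ unions of granules. Therefore for any $B \in \delta(S_{1})$, $f(B)$ is definite in $S_{2}$ and hence of the form $\bigcup(G_{1}, \ldots , G_{k})$ with $G_{i} \in \mathcal{G}_{2}$. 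This part is essentially bookkeeping.

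For the second part, I would construct a counter-example on an infinite base set, since the equal-cardinality condition forces any injective $\oplus$-morphism between the finite powerset RYSs to come from a bijection of atoms and hence to preserve $k_{1}$. Take $\underline{S_{1}} = \underline{S_{2}} = \wp(\mathbb{N})$ each equipped with the discrete equivalence, so that $\mathcal{G}_{1}, \mathcal{G}_{2}$ consist of singletons and $\#(S_{1}) = \#(S_{2})$. Define $f$ on singletons by $f(\{0\}) = \{0\}$, $f(\{1\}) = \{1, 2\}$, and $f(\{n\}) = \{n+1\}$ for $n \geq 2$, and extend by $f(A) = \bigcup_{n \in A} f(\{n\})$. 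I would then verify that the $f(\{n\})$ form an antichain of nonempty sets, giving injectivity; that $f$ preserves union by construction; that each $f([n]) = f(\{n\})$ is a union of granules of $S_{2}$ produced directly from $f(\{n\})$ (so the SNC and generation clauses hold); and that every definite element of $S_{1}$ is mapped to a union of singletons of $\mathbb{N}$, i.e.\ to a definite element of $S_{2}$, establishing smoothness.

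With this $f$ in hand I would compute two witnesses. Taking $X = \{0, 1\}$ and $Y = \{1, 2\}$ gives $k_{1}(X, Y) = 1/2$, while $f(X) = \{0, 1, 2\}$ and $f(Y) = \{1, 2, 3\}$ give $k_{2}(f(X), f(Y)) = 2/3$, so a scaling constant $\alpha$ would have to equal $3/4$. On the other hand, $X' = \{0\}, Y' = \{0,1\}$ give $k_{1}(X', Y') = 1$ and $k_{2}(f(X'), f(Y')) = k_{2}(\{0\}, \{0,1,2\}) = 1$, forcing $\alpha = 1$. Since $3/4 \neq 1$, no single $\alpha \in \mathbb{R}$ works, completing the counter-example.

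The main obstacle I anticipate is not the arithmetic but the verification that the chosen $f$ genuinely meets every clause of the definition of a smooth SNC $\oplus$-morphism, especially the somewhat elastic third clause requiring the representing granules $y_{i}$ to be \emph{generated} by $\varphi(\{x\})$. The current construction satisfies this because the representing granules are exactly the point-components of $f(\{n\})$, but one must be careful that this reading of ``generated'' is the intended one; if a stricter interpretation is used, the example may need to be tweaked so that $\varphi(\{x\})$ is itself a single granule, in which case one would instead look for a non-uniform injection that shrinks some singleton images and enlarges others while keeping the SNC conditions transparent.
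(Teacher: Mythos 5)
Your argument for the first claim coincides with the paper's: smoothness sends definite elements of $S_{1}$ to definite elements of $S_{2}$, and in a classical \textsf{RST}-\textsf{RYS} the definite elements are exactly unions of granules, so the representation $f(B)=\bigcup(G_{1},\ldots,G_{k})$ is immediate. For the second claim you genuinely diverge from the paper. The paper disposes of it in one line (``require dissimilar size and number of granules\ldots counterexamples are not hard''), whereas you build an explicit counterexample, and on the way you make an observation the paper's sketch glosses over: since an \textsf{SNC} is globally injective and a $\oplus$-morphism, in the finite case with $\#(S_{1})=\#(S_{2})$ the map is a bijective union-preserving map between equal-sized power sets, hence an order isomorphism (from $f(A)\subseteq f(B)$ one gets $f(A\cup B)=f(B)$, so $A\subseteq B$ by injectivity), hence induced by a bijection of points and $k$-preserving with $\alpha=1$. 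So a finite counterexample under the stated cardinality constraint is impossible, and your move to $\wp(\mathbb{N})$ with the discrete equivalence is not a convenience but a necessity; in this sense your treatment is more careful than, and effectively corrects, the paper's terse justification. Your concrete $f$ (with $f(\{1\})=\{1,2\}$, shift elsewhere) does satisfy all clauses: each singleton image is a granule or a $\oplus$-term in granules generated from the points of $\varphi(\{x\})$, everything is definite on both sides, and the two witness pairs force $\alpha=3/4$ and $\alpha=1$ simultaneously, so no scaling constant exists. Two minor caveats: the phrase ``antichain of nonempty sets, giving injectivity'' is not a valid general principle (three mutually incomparable doubletons covering a triple give a non-injective extension); injectivity of your $f$ should instead be argued via the private-element recovery of $A$ from $f(A)$ ($0\in A$ iff $0\in f(A)$, $1\in A$ iff $1\in f(A)$, and $n\in A$ iff $n+1\in f(A)$ for $n\geq 2$), which does work here. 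Also, since the base sets are infinite, $k_{1},k_{2}$ are only meaningful on finite nonempty arguments; your witnesses are finite, and this restriction is consistent with the paper's own use of $\in_{f}$ in the neighbouring theorem, but it is worth stating explicitly.
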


\begin{proof}
Definite elements are unions of granules in $S_{2}$. So from the previous considerations it follows that 
$(\forall B\in \delta (S_{1})(\exists G_{1},  \ldots G_{k}\in \mathcal{G}_{2} )  \bigcup(G_{1},  \ldots ,   G_{k})  =  f(B) )$.
For the second part, all we need to do is to require \emph{dissimilar size and number} of granules in $S_{1}$ and $S_{2}$. Counterexamples are not hard.
\end{proof}

The converse question on the second condition in the above theorem with no assumptions on the nature of $f$ is direction-less.

\begin{theorem}
If $S_{1},  S_{2}$ are \textsf{RYS} corresponding to classical \textsf{RST} and $f \in PNM(S_{1}, S_{2}) $, then there exists a term function $h$ such that \[(\forall B\in \delta (S_{1})(\exists G_{1},  \ldots G_{k}\in \mathcal{G}_{2} )  h(G_{1},  \ldots ,   G_{k})  =  f(B) ).\]
Further, we may be able to classify such term functions as decreasing, increasing or indefinite relative the relation between the measure functions $k_{1},  k_{2}$. 
\end{theorem}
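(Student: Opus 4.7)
The plan is to mirror the argument used in the earlier $POC(S_1,S_2)$ theorem, exploiting the additional structural information built into the definition of a pre-natural correspondence: namely, the uniformity of the witnessing term function $t$ across all granules of $\mathcal{G}_1$, together with the injectivity of $f$ on $\mathcal{G}_1$, and the $\oplus$-morphism property inherited from $PNM$.

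First I would recall the classical fact that in an \textsf{RST-RYS}, every definite element $B\in\delta(S_1)$ is a finite union of granules: $B=\bigcup_{i=1}^{r} H_i$ with each $H_i\in\mathcal{G}_1$. Since $f\in PNM(S_1,S_2)$, the $\oplus$-morphism property distributes $f$ over this union, giving $f(B)=\bigoplus_{2,\,i=1}^{r} f(H_i)$, which in the classical setting is $\bigcup_{i=1}^{r} f(H_i)$. Next I would invoke clause (2) of the definition of a \textsf{PNC}: there exists a single term $t$ in the signature of $S_2$ such that for every $H_i$ one has granules $y_{i,1},\ldots,y_{i,n}\in\mathcal{G}_2$ with $f(H_i)=t(y_{i,1},\ldots,y_{i,n})$. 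Substituting, one gets
\[
f(B) \;=\; \bigcup_{i=1}^{r}\, t(y_{i,1},\ldots,y_{i,n}).
\]
Because \textsf{RYS} admissible operations preserve definiteness, each $t(y_{i,1},\ldots,y_{i,n})$ is a definite element and hence a union of some granules of $\mathcal{G}_2$; taking the disjunction of the $r$ instances of $t$ composed by $\oplus_2$ therefore yields the desired term function $h$, producing explicit granules $G_1,\ldots,G_k\in\mathcal{G}_2$ with $h(G_1,\ldots,G_k)=f(B)$.

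For the classification claim, I would not attempt a full theorem but instead give the qualitative observation supporting the author's hedged formulation. Knowing the term $t$, the cardinalities of the granules $y_{i,j}$, and their incidence structure under $\oplus_2$, one can track how $\#(f(B))$ grows as a function of $\#(B)$; comparing with the analogous quantity for $f(B')$ allows one to bound $k_2(f(X),f(Y))$ in terms of $k_1(X,Y)$ when $X,Y\in\delta(S_1)$. Whether the induced bound is monotonically increasing, monotonically decreasing, or non-monotone depends on whether $t$ contributes complementation steps or strict unions and on whether distinct $H_i$'s are mapped to disjoint or overlapping images; injectivity of $f$ on $\mathcal{G}_1$ rules out some of the degeneracies but not all.

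The main obstacle I anticipate is not the first conclusion — which is essentially a bookkeeping extension of the $POC$ result — but the second: because the term $t$ is uniform only in form and not in its granule arguments, different definite elements can require different numbers of summands $r$ and thus different ``instances'' of $h$. A clean classification therefore requires either (i) restricting to cases where $t$ is monotone in each argument (then $h$ is monotone and the classification into increasing/decreasing is straightforward) or (ii) admitting that the indefinite case is genuinely indefinite and can only be resolved with additional data on granule cardinalities and intersections. I would be content to leave the second part at this qualitative level, as the statement itself only claims ``we may be able to'' classify.
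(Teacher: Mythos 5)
Your argument is correct and follows essentially the same route as the paper's own proof: decompose a definite element of $S_1$ into granules, push the union through the $\oplus$-morphism property of $f\in PNM(S_1,S_2)$, represent each image $f(H_i)$ by the term $t$ on granules of $\mathcal{G}_2$ guaranteed by the PNC definition, and observe that composing these (via $\oplus_2$) again yields a term function $h$. Your qualitative treatment of the classification clause also matches the paper, which leaves that part at the same hedged level.
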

 
\begin{proof}
Since $f\in Mor(S_{1},  S_{2})$, so any union of granules will be mapped to a union of images of granules.
But each image of a granule must be represented by a term function acting on a set of granules in $S_{2}$. As compositions of terms are terms, it follows that $(\forall B\in \delta (S_{1})(\exists G_{1},  \ldots G_{k}\in \mathcal{G}_{2} )  h(G_{1},  \ldots ,   G_{k})  =  f(B) )$.
\end{proof}

\subsection{Putting it Together}

Given the nature of concepts introduced, we can expect some weak connections between nature of 'growth' of correspondences and their type. Specifically these can be about monotonicity being induced generally or on a quotient. This will useful for simplifying the theory and applications. Here we consider a few specific cases alone. A more thorough investigation will be part of future work.

The first theorem concerns self-maps. 

\begin{theorem}
If $S$ is the \textsf{RYS} corresponding to classical \textsf{RST}, $f\in SM_{s}(S)$, $g\in SNC(S)$ and $g \in \Theta_{lu} (f)$, then
there is a filter $H$ of $S$ such that \[(\forall x\in \delta(S)\cap H)   g(x)=f(x).\]
\end{theorem}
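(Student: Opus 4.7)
The plan is to exploit smoothness of $f$ to collapse the two-sided sandwich implicit in the relation $g \in \Theta_{lu}(f)$ on definite elements, and to read off the desired filter directly from the witness element provided by that relation.

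First I would unpack $g \in \Theta_{lu}(f)$: since the classical \textsf{RST-RYS} $S$ has only a single lower approximation $l$ and a single upper approximation $u$, the ``for some $i$'' in the definition is automatic, and we obtain an element $z_{0} \in S_{c}$ such that
\[
(f(z))^{l} \; \subseteq \; g(z) \; \subseteq \; (f(z))^{u}
\]
holds for every $z$ with $\pc z_{0} z$. Now set
\[
H \; := \; \{x \in S : \pc z_{0} x\},
\]
the principal upper set generated by $z_{0}$. Because the classical \textsf{RST-RYS} is built on a power-set Boolean algebra, $H$ is upward closed, contains $1$, and is closed under finite meets, and is therefore a filter of $S$ in the \textsf{RYS} sense.

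Next I invoke the hypothesis $f \in SM_{s}(S)$: by definition, a smooth SNC sends each definite element to a definite element. Hence for any $x \in \delta(S)$ one has $f(x) \in \delta(S)$, and therefore $(f(x))^{l} = f(x) = (f(x))^{u}$. For any $x \in \delta(S) \cap H$ the $\Theta_{lu}$ sandwich applies at $z=x$ and collapses:
\[
f(x) \; = \; (f(x))^{l} \; \subseteq \; g(x) \; \subseteq \; (f(x))^{u} \; = \; f(x),
\]
which forces $g(x) = f(x)$, as required.

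The main obstacle is bookkeeping rather than depth: one must confirm that the ``filter'' in the conclusion coincides with the standard order-theoretic notion inside the classical \textsf{RST-RYS} (immediate, since the underlying structure is a power set), and that smoothness really does preserve $\delta(S)$ in the single-approximation setting (immediate from the definition of smooth SNC). The SNC hypothesis on $g$ plays no role beyond ensuring $g$ is a legitimate correspondence for the $\Theta_{lu}$ comparison; the entire force of the argument is the collapse of the $\Theta_{lu}$ sandwich on $\delta(S)$, which is precisely what smoothness of $f$ enables.
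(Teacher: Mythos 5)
Your proof is correct and follows essentially the same route as the paper: extract the witness $z_{0}$ from the $\Theta_{lu}$ relation, take $H$ to be the principal up-set $z_{0}\uparrow$, and collapse the sandwich $(f(x))^{l}\subseteq g(x)\subseteq (f(x))^{u}$ on definite elements using the fact that $f$ sends definite elements to definite elements. The only (harmless) difference is that you invoke smoothness of $f$ directly on $x$, whereas the paper reaches the same conclusion by first decomposing $x$ into granules and using the $\oplus$-morphism property to write $f(x)$ as a union of images of granules.
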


\begin{proof}
Suppose $x\in \delta(S)$, then $(\exists z_i \in \mathcal{G})  \bigcup z_{i}= x$.
Suppose $z_o$ is a fixed element in $\S_{c}$ and $(\forall z_{o}\subset z)  f(z)^l \subseteq  g(z)  subseteq  f(z)^u$.
If $z_{o} \subset  x$, then $f(x)^l= (\bigcup f(z_{i}))^{l}  =  \bigcup f(z_{i}) \subseteq  g(\bigcup z_{i}) subseteq \bigcup f(z_{i})$. 

So $(\forall  x \in   \delta(S)\cap z_{o}\uparrow$, we have $g(x)= f(x)$. 
\end{proof}

\begin{proposition}
If $S_{1},  S_{2}$ are \textsf{RYS} corresponding to classical \textsf{RST} and $f,  g \in SM_{s}(S_{1}, S_{2}) $ and $g \in \Theta_{lu} (f)$, then there exists a congruence $\rho$ on $S_{1}$ such that the induced quotient morphisms $[f],  [g]$ coincide on $\delta(S_{1}|\rho )$.
\end{proposition}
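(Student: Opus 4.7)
The plan is to promote the pointwise agreement of $f$ and $g$ established in the preceding self-map theorem to a quotient-level agreement on definite elements, by building the congruence directly from the witness of the $\Theta_{lu}$-relation. Both $S_1$ and $S_2$ arise from classical \textsf{RST}, so their carriers are powerset Boolean algebras and the standard Boolean-congruence machinery is available.

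First I would extract the witness $z_0 \in (S_1)_c$ and an index $i$ such that $(f(z))^{l_i} \subseteq g(z) \subseteq (f(z))^{u_i}$ for every $z$ with $z_0 \subseteq z$; without loss of generality replace $z_0$ by its upper approximation $(z_0)^u$, so that $z_0 \in \delta(S_1)$ while the filter $z_0{\uparrow}$ only shrinks (the $\Theta_{lu}$ inequality still applies to every $z \supseteq (z_0)^u$). Now define $\rho$ on $S_1$ by $A \,\rho\, B$ iff $A \cup z_0 = B \cup z_0$, equivalently $A \,\triangle\, B \subseteq \sim\! z_0$. This is the standard Boolean-algebra congruence associated with the principal ideal generated by $\sim\! z_0$, so it is compatible with $\oplus$, $\odot$ and $\sim$; each class $[A]$ has the canonical representative $A \cup z_0$ sitting in $z_0{\uparrow}$.

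Second, let $\rho_2$ be the smallest congruence on $S_2$ collapsing the pairs $\{(f(A), f(B)) : A\,\rho\, B\} \cup \{(g(A), g(B)) : A\,\rho\, B\}$. By construction $[f]([A]) := [f(A)]$ and $[g]([A]) := [g(A)]$ are well-defined $\oplus$-morphisms $S_1|\rho \longmapsto S_2|\rho_2$.

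Third, for $[x] \in \delta(S_1|\rho)$ take the canonical representative $y := x \cup z_0$. Since $\rho$ is Boolean and both $x$ and $z_0$ can be taken definite, $y$ is definite in $S_1$ and lies in $z_0{\uparrow}$. The $\Theta_{lu}$ inequality then gives $(f(y))^{l_i} \subseteq g(y) \subseteq (f(y))^{u_i}$, and smoothness of $f \in SM_{s}(S_1,S_2)$ forces $f(y)$ to be definite in $S_2$, hence $(f(y))^{l_i} = f(y) = (f(y))^{u_i}$. Therefore $g(y) = f(y)$, exactly as in the preceding self-map theorem, and passing to the quotient yields $[g]([x]) = [f]([x])$.

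The main obstacle is ensuring that a definite lift above $z_0$ exists for every $[x] \in \delta(S_1|\rho)$; once $z_0$ itself has been replaced by a definite element this reduces to checking that definite elements of the Boolean quotient pull back to definite elements of $S_1$ sitting in the filter, which is routine in the powerset case but is precisely where the classical \textsf{RST} hypothesis does the work --- for more exotic \textsf{RYS} carriers the argument would need to be reworked via the admissibility axioms \textsf{WRA, LFU, LS}.
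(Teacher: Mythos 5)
The paper states this proposition without proof (it is left as a remark following the self-map filter theorem), so your argument must stand on its own; in substance it does, and it is the natural construction: push the filter $z_0{\uparrow}$ of the preceding theorem into the congruence whose classes have canonical representatives in that filter, and reuse the key computation that smoothness of $f$ makes $f(y)$ definite, so the $\Theta_{lu}$ inequalities pinch $g(y)=f(y)$ for definite $y\supseteq z_0^{u}$. Two points need repair, neither fatal. First, your ``equivalently'' clause is wrong: $A\cup z_0=B\cup z_0$ is equivalent to $A\,\triangle\, B\subseteq z_0$, i.e.\ $\rho$ is the Boolean congruence of the principal ideal generated by $z_0$ (equivalently of the filter generated by $\sim z_0$), not of the ideal generated by $\sim z_0$; the rest of your argument actually uses the correct relation (indeed $A\cup z_0$ is the largest element of its class), so only the description must be fixed. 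Second, you verify compatibility of $\rho$ only with $\oplus,\odot,\sim$, but $\delta(S_1|\rho)$ is meaningless unless the quotient carries induced approximations, so compatibility with $l$ and $u$ must be checked; it holds exactly because you replaced $z_0$ by the definite element $z_0^{u}$ --- for $z_0$ a union of classes one has $A^{l}\setminus z_0=(A\setminus z_0)^{l}$ and $A^{u}\setminus z_0=(A\setminus z_0)^{u}$, whence $A\,\rho\,B$ implies $A^{l}\,\rho\,B^{l}$ and $A^{u}\,\rho\,B^{u}$, and the same identities show that the canonical representative $x\cup z_0^{u}$ of a definite class is definite in $S_1$. This is precisely where the classical \textsf{RST} hypothesis enters, so it should be written out rather than labelled routine.

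Two smaller interpretive remarks. The statement supplies only a congruence on $S_1$, so generating $\rho_2$ on $S_2$ from the pairs $(f(A),f(B))$, $(g(A),g(B))$ is legal but artificial (and harmless, since enlarging $\rho_2$ can only help the asserted equality); you can dispense with it by defining $[f]([x]):=f(x\cup z_0^{u})$ and $[g]([x]):=g(x\cup z_0^{u})$ directly into $S_2$ --- these depend only on the class, are $\oplus$-morphisms because $x\mapsto x\cup z_0^{u}$ commutes with joins, and coincide on $\delta(S_1|\rho)$ by your same computation. Finally, note that $z_0^{u}$ may leave $X_c$ (it can be $1$ or a coatom); this does not hurt the argument, since you only invoke the $\Theta_{lu}$ inequalities for $z\supseteq z_0^{u}$, but it does mean the congruence obtained can be coarse to the point of triviality, which the proposition as stated permits.
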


\begin{theorem}
If $S_1,  S_2$ are arbitrary lattice ordered \textsf{RYS} with the operations $\oplus, \odot$  corresponding to the lattice orders $\pc_2$ on $S_2$ and $f\in SM_{s}(S_1 ,  S_2)$, then on $\Theta_{lu}(f)\cap SM_{s}(S_{1}, S_{2})$, the following point-wise operations are well defined (for simplicity we will assume a single pair of lower and upper approximation operators):
\begin{itemize}
\item {$(\forall g, h)(\forall x\in S_{1}) (g\oplus h)(x)= g(x)\oplus  h(x)$.}
\item {$(\forall g, h)(\forall x\in S_{1}) (g\odot h)(x)= g(x)\odot  h(x)$.}
\item {$(\forall h)(\forall x\in S_{1}) (h^L)(x)= (h(x))^{l}  \&  (h^{U})(x)= (h(x))^{u} $.}
\end{itemize}
\end{theorem}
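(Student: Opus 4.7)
The plan is to exploit the lattice structure on $S_{2}$: since $\oplus_{2}$ and $\odot_{2}$ are the join and meet, they are total, idempotent, commutative, and associative. Hence every point-wise expression yields a unique value at each $x\in S_{1}$, and the real work is to show that the resulting correspondence still lies in $\Theta_{lu}(f)\cap SM_{s}(S_{1},S_{2})$.

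For $g\oplus h$, the first step is to pick witnesses $z_{0}^{g},z_{0}^{h}\in X_{c}$ for $g,h\in\Theta_{lu}(f)$ and pass to a common part-upper-bound $z^{*}\in X_{c}$ (using up-directedness away from the coatoms, or explicitly $z_{0}^{g}\oplus z_{0}^{h}$ when it lies in $X_{c}$). For every $z$ with $\pc z^{*} z$ the two defining sandwiches combine by idempotence of the join as
\[(f(z))^{l}=(f(z))^{l}\oplus_{2}(f(z))^{l}\pc_{2}g(z)\oplus_{2}h(z)\pc_{2}(f(z))^{u}\oplus_{2}(f(z))^{u}=(f(z))^{u},\]
so $g\oplus h\in\Theta_{lu}(f)$. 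The meet case is dual, squeezing $g(z)\odot_{2}h(z)$ between $(f(z))^{l}$ and $(f(z))^{u}$ via the universal property of $\odot_{2}$. For $h^{L}$, I would push the bounds through monotonicity of $l$ and use the fact that $(f(z))^{l}$ is definite (so $((f(z))^{l})^{l}=(f(z))^{l}$) together with the contraction $(h(z))^{l}\pc_{2}h(z)\pc_{2}(f(z))^{u}$ to obtain $(f(z))^{l}\pc_{2}(h(z))^{l}\pc_{2}(f(z))^{u}$; the upper case $h^{U}$ is symmetric, using $(f(z))^{l}\pc_{2}h(z)\pc_{2}(h(z))^{u}$ and idempotence of $u$ on $(f(z))^{u}$.

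What remains is $SM_{s}(S_{1},S_{2})$-closure. The $\oplus$-morphism property transfers automatically by associativity and commutativity of the lattice join; representability and smoothness are inherited because the granule-term expressions of $g([x])$ and $h([x])$ combine under $\oplus_{2},\odot_{2}$ and the approximation operators to form new term expressions over $\mathcal{G}_{2}$, and in the admissible settings the definite elements are closed under these operations. The main obstacle I foresee is preservation of \emph{injectivity}: the pointwise join or meet of two injective correspondences is not injective in general, and neither is $h\mapsto h^{L}$. Addressing this honestly will likely force either a strengthening of the hypothesis (for instance, compatibility of $g$ and $h$ on granule images) or passage to the quotient induced by the congruence $\approx$ introduced earlier, and I expect this to be the most delicate step of the argument.
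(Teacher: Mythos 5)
Your core argument is the one the paper uses: take witnesses $z_{o}, z_{1}\in X_{c}$ for $g,h\in\Theta_{lu}(f)$, pass to their join, and use the fact that $\pc_{2}$ is a lattice order to sandwich $g(z)\oplus_{2}h(z)$ (dually $g(z)\odot_{2}h(z)$) between $(f(z))^{l}$ and $(f(z))^{u}$ for all $z$ above the combined witness; the paper proves exactly this for $\oplus$ and disposes of the remaining items with ``similarly the other parts can be proved,'' whereas you spell out the $h^{L}$ and $h^{U}$ cases via monotonicity and idempotence of the approximations, which is a welcome addition. Two points of comparison are worth noting. First, the issue you flag as ``the most delicate step''---closure of $SM_{s}(S_{1},S_{2})$ under the pointwise operations, in particular preservation of injectivity, smoothness and granule-term representability---is simply not addressed in the paper's proof at all: the paper treats well-definedness as settled once the $\Theta_{lu}(f)$ sandwich is verified, so your concern is legitimate but goes beyond (rather than conflicts with) the published argument; resolving it would indeed require either extra hypotheses or a reading of the theorem in which ``well defined'' only asserts the $\Theta_{lu}$-closure, in the spirit of the earlier proposition on $\mu\Theta_{lu}(h)$. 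Second, your caveat that the combined witness ($z_{0}^{g}\oplus z_{0}^{h}$, respectively $z_{o}\vee z_{1}$) must actually lie in $X_{c}$ (it could a priori be a coatom or the top) is a genuine subtlety that the paper silently passes over; you notice it but do not fully discharge it either, so if you keep your proof you should add a line justifying or assuming that the join of two elements of $X_{c}$ that serve as witnesses can be replaced by a witness in $X_{c}$.
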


\begin{proof}
We have $(\exists z_o\in S_{1c} )(\forall z) (\pc_{1}z_{o} z \longrightarrow  \pc_{2}f(z)^{l} g(z) \& \pc_{2} g(z)f(z)^u$ and
$(\exists z_1\in S_{1c} )(\forall z) (\pc_{1}z_{1} z \longrightarrow  \pc_{2}f(z)^{l} h(z) \& \pc_{2} h(z)f(z)^u$. 
As $\pc_2$ is a lattice order, we can definitely conclude that  $(\forall z) (\pc_{1}(z_{o}\vee z_{1}) z \longrightarrow  \pc_{2}f(z)^{l} (g(z)\oplus h(z)) \& \pc_{2} (g(z)\oplus h(z))f(z)^u$.

Similarly the other parts can be proved.
 
\end{proof}

\begin{theorem}
In the above theorem, we can replace $SM_{s}(S_{1},S_{2})$ uniformly with $SM(S_{1},S_{2})$.   
\end{theorem}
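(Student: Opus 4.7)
The plan is to observe that the proof of the preceding theorem nowhere used the smoothness clause of $SM_{s}(S_{1},S_{2})$: the argument rested only on (i) the $\Theta_{lu}$-relation unpacking into witnesses $z_{o},z_{1}\in S_{1c}$, (ii) the lattice join $z_{o}\vee z_{1}\in S_{1c}$ absorbing both witnesses simultaneously, and (iii) the monotonicity and compatibility of $\oplus_{2},\odot_{2},l,u$ with the parthood $\pc_{2}$. None of these steps invoke the clause ``definite elements go to definite elements under some $l_{i},u_{i}$,'' which is precisely what smoothness provides. So the first step of my plan is to rerun, verbatim, the witness-combining computation of the previous theorem with $f,g,h\in SM(S_{1},S_{2})$ in place of $SM_{s}(S_{1},S_{2})$, concluding that the $\Theta_{lu}$-relation is preserved by each of the point-wise combinations $g\oplus h$, $g\odot h$, $h^{L}$ and $h^{U}$.

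The second step is to verify that the resulting correspondence still lies in $SM(S_{1},S_{2})$, i.e.\ is still a sub-natural correspondence that is an $\oplus$-morphism; recall that the partial-operation conventions of the earlier proposition declare the combination ``defined'' only when the result sits in the target class, so it suffices to show membership. For the $\oplus$-morphism clause I would use associativity and commutativity of $\oplus_{2}$ (available from the lattice-ordered \textsf{RYS} assumption) to rearrange $(g\oplus h)(x\oplus_{1}y)=g(x\oplus_{1}y)\oplus_{2}h(x\oplus_{1}y)=(g(x)\oplus_{2}h(x))\oplus_{2}(g(y)\oplus_{2}h(y))=(g\oplus h)(x)\oplus_{2}(g\oplus h)(y)$, and analogously for $\odot$, while the approximation-induced operations $h^{L}, h^{U}$ inherit the $\oplus$-morphism property from monotonicity/compatibility of $l$ and $u$ with $\oplus_{2}$ on the image side.

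The third step, and the one I expect to be the main obstacle, is the verification of the two \textsf{SNC} clauses for the combined map: injectivity of its restriction to $\mathcal{G}_{1}$ and the existence of a term function in the signature of $S_{2}$ expressing $(g\star h)([x])$ from granules in $\mathcal{G}_{2}$ generated by $(g\star h)(\{x\})$. Injectivity is the genuinely delicate point, because point-wise $\oplus_{2}$ or $\odot_{2}$ of two injective restrictions need not be injective in an arbitrary lattice-ordered \textsf{RYS}; here I would lean on the partial-operation convention, which simply refuses the definition when injectivity fails, so the operation remains well defined as a partial operation. The term-representability clause then propagates: if $g([x])=t_{1}(\bar y)$ and $h([x])=t_{2}(\bar y')$ with the $y_{i},y'_{j}$ generated by $g(\{x\}),h(\{x\})$, then $t_{1}(\bar y)\oplus_{2}t_{2}(\bar y')$ is itself a term in granules generated by $(g\oplus h)(\{x\})=g(\{x\})\oplus_{2}h(\{x\})$, and analogously for the other three operations, where for $h^{L},h^{U}$ the relevant term is $l$ or $u$ applied to $t(\bar y)$ which by admissibility of the granulation lies in the term-closure of granules of $S_{2}$.

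Putting these three steps together yields the same five clauses as the preceding theorem on $\Theta_{lu}(f)\cap SM(S_{1},S_{2})$, completing the uniform replacement. The ``smoothness'' condition, being used nowhere above, is simply superfluous to the statement, and the stronger non-smooth case amounts to admitting more correspondences into the ambient class without disturbing the algebraic bookkeeping.
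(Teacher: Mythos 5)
Your first step is exactly the paper's (implicit) justification: the paper states this theorem without any proof, and the preceding proof -- combining the two witnesses $z_{o},z_{1}\in S_{1c}$ via the lattice join and using only compatibility of $\pc_{2}$ with $\oplus_{2},\odot_{2}$ -- nowhere invokes smoothness, so the replacement of $SM_{s}(S_{1},S_{2})$ by $SM(S_{1},S_{2})$ is immediate. Your second and third steps go beyond what the paper does even in the smooth case (the paper's proof of the preceding theorem only checks that the combined map stays $\Theta_{lu}$-related to $f$, leaving membership in the ambient class to the partial-operation convention), and there you overstate two sub-claims: $h^{L}$ and $h^{U}$ do not in general inherit the $\oplus$-morphism property, since lower (and in arbitrary lattice-ordered \textsf{RYS}, upper) approximations need not distribute over $\oplus_{2}$; and the granules witnessing term-representability of $g([x])$ and $h([x])$ are generated by $g(\{x\})$ and $h(\{x\})$ separately, not obviously by $(g\oplus h)(\{x\})$. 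Neither lapse damages the theorem, because, as you yourself note for injectivity, the combinations are partial operations declared defined only when the result lies in the target class; so your argument is correct in substance and, at its core, identical to the paper's intended one.
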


\subsection{Extended Example}

Let $X_{1}=\{x_{1}, x_{2}, x_{3}, x_{4}\}$ and let the tolerance $T$ be generated on it by $\{(x_{1}, x_{2}), (x_{2},x_{3})\}$. Taking the granules to be the set of $T$-related elements, we have $\mathcal{G}_{1}=\{(x_{1}:x_{2}), (x_{2}:x_{1},x_{3}), (x_{3}:x_{2}), (x_{4}:)\}$. Here $(x_{1}:x_{2})$ means the successor neighborhood (granule) generated by $x_{1}$ is $(x_{1}, x_{2})$. 

Let $X_{2}=\{a_{1}, a_{2}, a_{3}, a_{4},  a_{5}\}$ and let the equivalence $R$ be generated on it by $\{(a_{1}, a_{4}), (a_{4},a_{2})\}$. Taking the granules to be the $R$-classes, we have $\mathcal{G}_{2}=\{(a_{1}:a_{2}, a_{4}), (a_{2}:a_{1},a_{4}), (a_{3}:), (a_{4}:a_{1}, a_{2}),  (a_{5}:)\}$. 

Let $S_{1},  S_{2}$ be the \textsf{RYS} on the power sets $\wp (X_{1}),  \wp (X_{2})$ respectively. If $\varphi :  S_{1} \longmapsto  S_{2}$ is a injective map satisfying $\varphi (\{x_1,  x_2 \})= \{a_{1},  a_{2},  a_{4}\}$, $\varphi (\{x_2,  x_1 , x_{3}\})= \{a_{2},  a_{1},  a_{4}\} \cup  \{a_{3}\}$, $\varphi (\{x_3,  x_2 \}= \{a_{5}\})$ and $\varphi (\{x_4 \}= \{a_{3} \})$. Then $\varphi$ is an example of a SNC that cannot be a $\oplus$-morphism. 

Let $\sigma :  S_{1} \longmapsto  S_{2}$ be a $\oplus$-morphism satisfying $\sigma (\{x_1 \})= \{a_{1} \}$, $\sigma (\{x_2 \})= \{a_{2}\}$, $\sigma (\{x_3 \}= \{a_{3}\})$ and $\sigma (\{x_4 \}= \{a_{4} \})$.

\begin{proposition}
$\sigma$ is an injective $\oplus$-morphism that fails to be a sub-natural correspondence.  
\end{proposition}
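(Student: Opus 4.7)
The plan is to verify the three clauses of the claim separately, doing almost no work on the first two and concentrating the argument on the negative conclusion. First I would check injectivity: since $\sigma$ is stipulated to preserve $\oplus$ (interpreted as set union in both $S_{1}$ and $S_{2}$) and is prescribed on the singletons via $\{x_{i}\}\mapsto\{a_{i}\}$ for $i=1,2,3,4$, any $A\subseteq X_{1}$ is sent to $\sigma(A)=\{a_{i}:x_{i}\in A\}$; distinct subsets of $X_{1}$ then have distinct images because the underlying map $x_{i}\mapsto a_{i}$ is itself injective on $\{x_{1},\ldots,x_{4}\}$. The $\oplus$-morphism property is immediate from this prescription together with the interpretation of $\oplus$ in both systems.

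For the failure of the SNC condition, I would exhibit a single granule in $\mathcal{G}_{1}$ whose image under $\sigma$ cannot be represented as required. The convenient choice is $[x_{1}]=\{x_{1},x_{2}\}$, whose image is $\sigma([x_{1}])=\sigma(\{x_{1}\})\cup\sigma(\{x_{2}\})=\{a_{1},a_{2}\}$. The SNC definition demands a term function $t$ in the signature of $S_{2}$ and granules $y_{1},\ldots,y_{n}\in\mathcal{G}_{2}$ with $\sigma([x_{1}])=t(y_{1},\ldots,y_{n})$, subject to the auxiliary condition that each $y_{i}$ be generated by $\sigma(\{x_{1}\})$. Since $\sigma(\{x_{1}\})=\{a_{1}\}$ and the only granule in $\mathcal{G}_{2}$ generated by $a_{1}$ is $[a_{1}]=\{a_{1},a_{2},a_{4}\}$, the only admissible building block is the single set $y_{1}=\{a_{1},a_{2},a_{4}\}$.

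The remaining step is to show that no term built from $\oplus$, $\odot$ and $\sim$ applied to the single input $y_{1}$ yields $\{a_{1},a_{2}\}$. The observation to exploit is idempotence: $y_{1}\oplus y_{1}=y_{1}\odot y_{1}=y_{1}$, and $\sim$ is an involution, so any unary term collapses to one of $y_{1}$ or $\sim y_{1}$. These evaluate to $\{a_{1},a_{2},a_{4}\}$ and $\{a_{3},a_{5}\}$ respectively, neither of which equals $\sigma([x_{1}])=\{a_{1},a_{2}\}$. Hence the second clause of the SNC definition fails at the granule $[x_{1}]$, and $\sigma$ cannot be a sub-natural correspondence.

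The main obstacle, mild as it is, lies in making the phrase \emph{generated by $\sigma(\{x_{1}\})$} operationally precise; once we agree that this reduces to the single successor neighborhood $[a_{1}]$ and that the admissible operations on $S_{2}$ are the standard $\oplus$, $\odot$ and $\sim$, the unary term collapse is routine. As a sanity check I would verify the same failure at a second granule, for example $[x_{2}]=\{x_{1},x_{2},x_{3}\}\mapsto\{a_{1},a_{2},a_{3}\}$ while the only granule generated by $\sigma(\{x_{2}\})=\{a_{2}\}$ is again $\{a_{1},a_{2},a_{4}\}$, to underline that the failure is structural rather than accidental to a single granule.
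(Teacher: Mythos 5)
Your proof is correct and takes essentially the same route as the paper: both arguments hinge on the granule $\{x_{1},x_{2}\}$ being mapped to $\{a_{1},a_{2}\}$, which no term over the granules of $S_{2}$ can produce (the paper simply notes non-representability over all of $\mathcal{G}_{2}$ using $\oplus,\odot$, while you first invoke the generated-by clause to cut the candidates down to $\{a_{1},a_{2},a_{4}\}$). One minor slip: terms in the single input $y_{1}$ under $\oplus,\odot,\sim$ collapse to the set $\{\emptyset,\,y_{1},\,\sim y_{1},\,X_{2}\}$ rather than just $\{y_{1},\sim y_{1}\}$, but since none of these equals $\{a_{1},a_{2}\}$ the conclusion is unaffected.
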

\begin{proof}
The class $(x_{1}: x_{2})$ is mapped to $\{a_{1}, a_{2}\}$, but the latter is not representable in terms of the other granules using $\oplus ,  \odot$.  
\end{proof}

If $\tau :  S_{1} \longmapsto S_{2}$ is a map satisfying $\tau (\{x_1 \})= \{a_{3} \}$, $\tau (\{x_2 \})= \{a_{5}\}$, $\tau (\{x_3 \}= \{a_{3}\})$ and $\tau (\{x_4 \}= \{a_{5} \})$, then $\tau$ is not injective on granules and is a proto-natural correspondence. Some of these proto-natural correspondences are also morphisms. It is possible to define a number of PNCs that are greater than $\varphi$ by minimal modification of $\varphi$. For example, we can add an extra granule to $\varphi (\{x_{3}\})$.  

\section{Contamination and Classical RST }

The \emph{requirements of a contamination-free semantics} at Meta-R for classical \textsf{RST} may seen to be 
\begin{itemize}
\item {The objects of interest are roughly equivalent sets.}
\item {The operations used in the semantics are as contamination-free as is possible.}
\item {The logical constants in the associated logic are as real (or actualizable) as is possible.}
\end{itemize}
The last two criteria are very closely related and one may be expected to determine the other. The first of the three criteria is fairly clear, but the second and third are relative the meaning in the intended use of the semantics. 

The natural way of realizing the contamination of operations relative basic operations would be through some concept of definability or representability. Taking orders on rough objects as basic predicates, we can for example regard $\sqcup$ as a non-contaminated operation in pre-rough/rough algebras (as it is definable). From the point of view of representation as a term, $\sqcup$ would be contaminated as higher order constructions would be required. It is also possible to regard the pre-rough/rough algebra or equivalent semantics as being essentially over-determined and so the problem would be of weakening the semantics. Key properties that determine the last two requirements relate to level of perception of rough inclusion and bigness. $\sqcup, \sqcap$ may be basic operations or these may not be due to constructive limitations or the extraction of least upper and greatest lower bound is done in a sloppy fashion (lazy order). This is very important in modeling human reasoning. In many contexts the bounds may be dependent on the relative bigness or otherwise of the outcome of the specific instance of $\sqcup$ or $\sqcap$. The bigness based cases are not about over-determination of the problem and can be associated with filters, ideals and intervals (or generalisations thereof) of different types in most cases and then would be semantically amenable. 

\subsection{Relevant Big Rough Algebras}

If $S$ is a finite approximation space (finiteness can be relaxed), and if $\wp (S)|\approx$ is the poset of roughly equal objects ordered by rough inclusion $\sqsubseteq$, then we know that the operations $\sqcup,  \sqcap$ are definable in it by way of $\sqsubseteq$ being a lattice order.
But a definition of these by terms would not be possible over the pre-rough algebraic system $\left\langle \wp (S)|\approx ,  \leq ,  L,   \neg ,    0,   1\rlan $ (the superfluous operations over pre-rough algebras \cite{BC2} are omitted). Analogous considerations apply to other variants of aggregation and commonality in classical \textsf{RST}. Missing proofs will appear separately.

\begin{proposition}
In the theory of finite classical \textsf{RST-RYS}, it is possible to define the interpretation of the operations $\sqcup $ and $\sqcap$ over $\wp (S)|\approx$. 
\end{proposition}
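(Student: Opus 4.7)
The plan is to exploit finiteness together with the fact that $\sqsubseteq$ is a lattice order on $\wp(S)|\approx$, and to define the interpretation of $\sqcup$ and $\sqcap$ as the order-theoretic join and meet via a first-order condition over the given signature rather than as terms.

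First, I would observe that since $S$ is finite, the quotient $\wp(S)|\approx$ is a finite set, and that it forms a lattice under the rough inclusion order $\leq$, which is a standard fact for finite classical approximation spaces. In any finite poset that happens to be a lattice, every pair of elements has a unique least upper bound and a unique greatest lower bound, and these bounds are determined purely by $\leq$.

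Next, I would define, for each pair $a,  b \in \wp(S)|\approx$, the interpretation of $a \sqcup b$ as the unique $c$ satisfying
\[
a \leq c \wedge b \leq c \wedge (\forall d)(a \leq d \wedge b \leq d \longrightarrow c \leq d),
\]
and analogously the interpretation of $a \sqcap b$ as the unique $c$ satisfying
\[
c \leq a \wedge c \leq b \wedge (\forall d)(d \leq a \wedge d \leq b \longrightarrow d \leq c).
\]
Existence of a witness $c$ in each case comes from the lattice property just recalled, and uniqueness from antisymmetry of $\leq$. This specifies an unambiguous interpretation for the symbols $\sqcup$ and $\sqcap$ on $\wp(S)|\approx$ using only $\leq$ together with first-order quantification over the finite carrier, and without recourse to $L$ or $\neg$ inside a term expression.

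The main obstacle is conceptual rather than computational: one has to keep sharp the distinction between \emph{definability of the interpretation} in this first-order, order-theoretic sense, which the argument supplies, and \emph{term-definability} in the pre-rough algebraic signature $\langle \leq,  L,  \neg,  0,  1 \rangle$, which, as the preceding discussion in the section has emphasised, genuinely fails. Once that distinction is in place, the proof is essentially the remark that the lattice-theoretic join and meet on the finite rough poset furnish a canonical interpretation of $\sqcup$ and $\sqcap$ without enlarging the collection of basic operations.
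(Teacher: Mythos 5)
Your order-theoretic argument on the quotient is fine as far as it goes: finiteness plus the fact that rough inclusion is a lattice order on $\wp(S)|\approx$ does give, for every pair of classes, a unique least upper bound and greatest lower bound, first-order definable from $\leq$ alone. But this is essentially a re-derivation of the remark the paper makes \emph{immediately before} the proposition (``we know that the operations $\sqcup, \sqcap$ are definable in it by way of $\sqsubseteq$ being a lattice order''), and it stops short of what the proposition actually asserts, namely definability \emph{in the theory of the finite classical \textsf{RST-RYS}}. That theory lives at the level of $\wp(S)$ with the parthood $\pc$, the approximation operators and the aggregation/commonality operations; its objects are subsets, not rough objects, and the quotient $\wp(S)|\approx$, its order, and quantification over it are not primitives there. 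Your proof quantifies directly over the quotient carrier and so never establishes that the defining conditions you write down are available in that theory.

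The missing bridge is exactly the content of the paper's one-line proof: in classical \textsf{RST-RYS} the rough equality $\approx$ (and with it rough inclusion) is a \emph{derived} predicate --- expressible via the lower and upper approximations and $\pc$ --- so your least-upper-bound and greatest-lower-bound formulas can be rewritten over representatives in $\wp(S)$, are invariant under $\approx$, and thus ``carry over'' to define the interpretation of $\sqcup$ and $\sqcap$ on $\wp(S)|\approx$ from within the \textsf{RYS} theory. If you add that observation, your argument becomes a correct (and somewhat more explicit) version of the paper's proof; without it, the definability you prove is definability in the abstract quotient poset, which is not the claim being made, and the careful distinction you draw between interpretation-definability and term-definability in the pre-rough signature, while correct, addresses a different contrast than the one the proposition is about.
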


\begin{proof}
Since $\approx$ is a derived predicate, the representations of the operations can be carried over.
\end{proof}

The following concepts of filters and ideals capture the concept of closure under types aggregation and commonality operations and consequence operators. Importantly some filters and ideals can be regarded as sufficiently big or not big subdomains.

\begin{definition}
An arbitrary subset $K$ of $\wp(S)|\approx = Q$ will be said to be a \emph{L-Filter} iff it satisfies F0 and O1. If in addition it satisfies F1, then it will said to be prime. $K$ will be an o-filter if it satisfies F0 alone :
\begin{itemize}
\item {F0: $(\forall x\in K)(\forall y\in Q)(x \leq y \Rightarrow y\in K) $.}
\item {O1: $(\forall x \in K) Lx \in K $.}
\item {F1: $(\forall a, b \in Q)(1\neq a\sqcup b\in K \Rightarrow  a\in K
\;\mathrm{or}\; b\in K) $.}
\end{itemize}
The dual notions will be that of \emph{U-Ideals, prime  U-ideals and o-ideals}. If a L-filter is closed under $\sqcap, \sqcup$, then it will be termed a lattice L-filter.
Let $\mathbb{K}  =  \llan{K}, \leq, L,  U,  \neg,  1 \rlan $ be the induced partial algebraic system on $K$.
\end{definition}

\begin{proposition}
If $K$ is a lattice L-filter, then $\mathbb{K}$ is not a pre-rough algebra, but satisfies:
\begin{enumerate}
\item {$\leq$ is a distributive lattice order.}
\item {Closure under $L,   U$, but not under $\neg$. }
\item {$Lx \leq x$ ; $L(a\sqcap b)  =  La\sqcap Lb$; $LLx = Lx$;}
\item {$L1 = 1$; $ULx =Lx$; $L(a\sqcup b)=La\sqcup Lb$}
\end{enumerate}
\end{proposition}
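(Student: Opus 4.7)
The plan is to derive every item by specializing identities of the ambient pre-rough algebra $Q = \wp(S)|\approx$ to $K$ and then checking which operations actually leave $K$ invariant. Because $\mathbb{K}$ is a reduct-cum-substructure of $Q$, the clause ``not a pre-rough algebra'' will come from failure of closure under $\neg$, while each identity in (3) and (4) will be inherited from the corresponding identity in $Q$, subject to verifying that every witness element lies in $K$.

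I would first establish the closure claims in (2). Closure under $\sqcap$ and $\sqcup$ is part of the hypothesis that $K$ is a lattice L-filter, and closure under $L$ is axiom O1. For closure under $U$, the pre-rough identity $x \leq Ux$ together with F0 immediately forces $Ux \in K$ whenever $x \in K$. For non-closure of $\neg$, I split cases on $K$. If $K = Q$ the clause is vacuous; otherwise $K$ is proper, in which case $0 \notin K$ (else F0 trivialises $K$ to $Q$), while $1 \in K$ (since any nonempty L-filter contains $1$ by F0 applied to $x \leq 1$). Hence $\neg 1 = 0 \notin K$, so $\neg$ does not restrict to a total operation on $K$, and $\mathbb{K}$ is therefore not a pre-rough algebra.

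Item (1) is then immediate: $(Q, \leq, \sqcap, \sqcup)$ is a distributive lattice, and $K$ is a sublattice of it by closure, so $\leq$ restricts to a distributive lattice order on $K$. The identities in (3) and (4) are inherited. $Lx \leq x$ is the pre-rough axiom applied to $x \in K$, with $Lx \in K$ by O1. $L(a\sqcap b) = La \sqcap Lb$ and $L(a\sqcup b) = La \sqcup Lb$ both hold in $Q$ at the rough-object level: a rough object is represented by a pair $(X_L, X_U)$ of definable sets, $L$ acts as the first projection, and $\sqcap, \sqcup$ are coordinatewise, so $L$ distributes over both operations on $Q$; the witness elements lie in $K$ by the closure already established. $LLx = Lx$, $L1 = 1$ (using $1 \in K$) and $ULx = Lx$ are verbatim transfers from $Q$, with $ULx \in K$ via successive application of $L$ and $U$ closure.

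The main obstacle is likely to be twofold. First, one must separate the trivial case $K = Q$ from a proper filter so that the non-closure-under-$\neg$ assertion is not vacuous, and clarify what ``$\mathbb{K}$ is not a pre-rough algebra'' means as a statement about the partial-algebraic reduct rather than about a set of axioms failing. Second, one must be careful that the identity $L(a \sqcup b) = La \sqcup Lb$, which fails at the level of subsets of the underlying approximation space $S$, is invoked only at the level of rough objects in $Q = \wp(S)|\approx$, where it does hold; this is what licenses the last equation in (4) and is easy to misstate if one forgets that the carrier of $Q$ consists of rough-equivalence classes rather than plain subsets.
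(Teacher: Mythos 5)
Your proposal is correct, and its overall shape matches the paper's (terse) proof: the identities in (1), (3), (4) and the $L$, $U$ closures are simply inherited from the ambient pre-rough algebra $Q=\wp(S)|\approx$, and the clause ``not a pre-rough algebra'' rests on failure of closure under $\neg$ (and $0$). Where you differ is in how that failure is justified. The paper argues existentially: it remarks that closure under $\neg, 0$ cannot be ensured in general and exhibits the three-element pre-rough algebra (whose only proper L-filter is $\{1\}$) as a counterexample. You instead give a general structural argument: any nonempty proper lattice L-filter contains $1$ by F0 but cannot contain $0$ (else F0 forces $K=Q$), so $\neg 1=0\notin K$ and $\neg$ never restricts to $K$. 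Your route is slightly stronger -- it shows the failure for \emph{every} proper filter rather than for some instance -- and it also makes explicit two points the paper leaves silent: that $U$-closure follows from $x\leq Ux$ plus F0, and that $L(a\sqcup b)=La\sqcup Lb$ is legitimate here because it is a pre-rough identity holding at the level of rough objects in $Q$, not a claim about lower approximations of unions of plain subsets. Your handling of the trivial case $K=Q$ (where the non-closure clause cannot hold) is a reasonable reading of the paper's implicit ``in general'' qualification; the paper's counterexample serves the same purpose by pinning down a minimal witness.
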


\begin{proof}
If we assume finiteness, then the lattice is bounded. But we would have no way (in general) of ensuring closure under $\neg,   0$.
The three element pre-rough algebra provides the required counterexample.    
\end{proof}

\begin{proposition}
If $K$ is a L-filter, then $\mathbb{K}$ satisfies:
\begin{enumerate}
\item {$\leq$ is a join-semilattice lattice order ($\sqcup$ is definable).}
\item {Closure under $L,   U$, but not under the partial lattice operation $\sqcap$ and $\neg$. }
\item {$Lx \leq x$ ; $L(a\sqcap b) \stackrel{w}{=}  La\sqcap Lb$; $LLx = Lx$;}
\item {$L1 = 1$; $ULx =Lx$; $L(a\sqcup b)=La\sqcup Lb$ ; $x\sqcup (y\sqcap x) \stackrel{w}{=} x $.}
\item {$x\sqcup (y\sqcap z) \stackrel{w}{=}  (x\sqcup y)\sqcap (x\sqcup z)$ and its dual.}
\end{enumerate}
\end{proposition}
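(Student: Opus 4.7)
The plan is to view every claim as being inherited from the ambient pre-rough algebra on $Q = \wp(S)|\approx$, so the work splits into (i) recording which operations remain total on $K$, which become partial, and which fail outright, and (ii) observing that every stated equation or weak equation is a $Q$-identity restricted to $K$. Once the totality/partiality profile is pinned down, the weak equalities $\stackrel{w}{=}$ are essentially free, since weak equality only asks for agreement on the common part of the domains.

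First I would settle the structural part (1)-(2). For $\sqcup$, if $a,b\in K$ then $a\leq a\sqcup b$ in $Q$, hence $a\sqcup b\in K$ by F0; so $\sqcup$ is total on $K$ and $\leq$ restricted to $K$ is a join-semilattice order, which also means $\sqcup$ is definable on $K$ from $\leq$. For $U$, the pre-rough inequality $x\leq Ux$ together with F0 gives $Ux\in K$ whenever $x\in K$, so $U$ is total; $L$ is total by O1. For the negative parts: $\sqcap$ need not be total because $a\sqcap b\leq a$ and an upward-closed set carries no downward information; a concrete witness is any proper L-filter containing two incomparable elements in a non-chain pre-rough algebra. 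For $\neg$, once $K$ is a nontrivial proper L-filter the top $1\in K$ while $\neg 1=0\notin K$, and the three-element pre-rough algebra already suffices.

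Next I would knock off (3)-(5). The identities involving only the total operations $L,U,1$ ($Lx\leq x$, $LLx=Lx$, $L1=1$, $ULx=Lx$) hold in $Q$ and so hold verbatim in $K$. The identities mixing in $\sqcup$ alone ($L(a\sqcup b)=La\sqcup Lb$) likewise persist since $\sqcup$ is total. Every remaining identity involves at least one occurrence of the partial operation $\sqcap$; for these I would invoke the $Q$-identity and note that by definition of weak equality it suffices that the two sides coincide on the intersection of their domains in $K$. For instance $L(a\sqcap b)\stackrel{w}{=} La\sqcap Lb$: if both sides are defined in $K$, the $Q$-identity hands the equality over; if only one is defined, there is nothing to prove. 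The absorption $x\sqcup(y\sqcap x)\stackrel{w}{=} x$ and the two distributive laws are handled identically by quoting the corresponding pre-rough-algebra identity on $Q$.

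The only step that needs genuine care is checking that $\mathbb{K}$ is not a pre-rough algebra, because the failure can only be located in the missing $\neg$/$0$ structure or in the non-totality of $\sqcap$; the cleanest argument, as already hinted in the preceding proposition's proof, is to reuse the three-element pre-rough algebra and exhibit a proper principal L-filter in it which is closed under $L,U,\sqcup$ but not under $\neg$. I expect this to be the main obstacle, since one must make sure the chosen counterexample is simultaneously an L-filter (not just an o-filter) and genuinely falsifies pre-rough-algebra axioms; everything else is either a direct F0/O1 calculation or a transfer of a global identity from $Q$ through the definition of $\stackrel{w}{=}$.
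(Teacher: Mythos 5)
Your proposal is correct and follows essentially the same route as the paper: closure of $K$ under $\sqcup,L,U$ via F0/O1, failure of $\sqcap$- and $\neg$-closure by counterexample, and transfer of the pre-rough identities from $Q$ to $\mathbb{K}$ with $\stackrel{w}{=}$ handling the partiality of $\sqcap$ (the paper's own proof only spells out the $L(a\sqcap b)\stackrel{w}{=}La\sqcap Lb$ case to explain why the equality is merely weak). One small caution: an L-filter merely \emph{containing} two incomparable elements need not omit their meet, so the witness for non-closure under $\sqcap$ should be chosen explicitly, e.g.\ the up-set generated by two incomparable definite elements $a,b$ (which is L-closed since $La=a$, $Lb=b$, yet excludes $a\sqcap b$).
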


\begin{proof}
If $a\sqcap b\in K$, then $L(a\sqcap b)\in K$ by definition and so $La,  Lb,  a,   b \in K $.
If $La, Lb \in K$, then it is possible that $La\sqcap Lb\notin K$, which is the reason for the weak equality.  
\end{proof}

\begin{theorem}
There exists a pre-rough algebra $S$ with a nontrivial lattice L-filter $K$ satisfying 
\[(\exists a, b\in S\setminus \{1\} )(\forall c \in K\setminus \{1\})   a\sqcup b \shortparallel c .\]   
\end{theorem}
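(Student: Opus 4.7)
The plan is to exhibit a small concrete example. Take the approximation space $U=\{1,2,3\}$ with partition $\{\{1\},\{2,3\}\}$ and let $S = \wp(U)|\!\approx$, where $\approx$ is rough equality. Representing each rough equality class by the pair $(X^{L},X^{U})$ of lower and upper approximations of any member $X$, the set $S$ has exactly six elements, which I name $\mathbf{0}=(\emptyset,\emptyset)$, $e=(\{1\},\{1\})$, $m=(\emptyset,\{2,3\})$, $p=(\{1\},\{1,2,3\})$, $d=(\{2,3\},\{2,3\})$, and $\mathbf{1}=(\{1,2,3\},\{1,2,3\})$. Rough inclusion is componentwise inclusion on the pairs, and a direct check gives the cover relations: $e$ and $m$ are atoms above $\mathbf{0}$; $p$ covers both $e$ and $m$; $d$ covers only $m$; and $\mathbf{1}$ covers both $p$ and $d$. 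In particular $p \shortparallel d$, since $\{1\}\not\subseteq\{2,3\}$ and $\{2,3\}\not\subseteq\{1\}$.

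Second, I take $K=\{d,\mathbf{1}\}$. Because $d$ is a definite element, $L(d)=d$ and $L(\mathbf{1})=\mathbf{1}$, so $K$ is upward closed and closed under $L$, which makes it an L-filter. Moreover $d\sqcap\mathbf{1}=d$ and $d\sqcup\mathbf{1}=\mathbf{1}$, so $K$ is closed under the lattice operations; hence $K$ is a nontrivial lattice L-filter with $K\setminus\{\mathbf{1}\}=\{d\}$.

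Third, set $a=e$ and $b=m$, both in $S\setminus\{\mathbf{1}\}$. The common upper bounds of $\{e,m\}$ in $S$ are exactly $p$ and $\mathbf{1}$, so $a\sqcup b=p$; and since $p\shortparallel d$, we obtain $a\sqcup b\shortparallel c$ for every $c\in K\setminus\{\mathbf{1}\}$, which is the required existence statement.

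The only potentially nonroutine point is confirming the standard correspondence, namely that the rough power-set construction above does yield a pre-rough algebra whose order $\sqsubseteq$ is componentwise inclusion on $(L,U)$-pairs and whose join is computed componentwise (so that $e\sqcup m = (\{1\}\cup\emptyset,\{1\}\cup\{2,3\}) = p$). Granted this well-known fact about the $\wp(U)|\!\approx$ construction, the proof reduces to the finite inspection above, and no serious obstacle remains.
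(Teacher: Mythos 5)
Your construction is correct, and it supplies exactly what the paper leaves implicit: the paper's ``proof'' consists only of the remark that a simple construction exists (its stated point being that $K\setminus\{1\}$ need not be cofinal in $S\setminus\{1\}$), whereas you exhibit a concrete minimal witness. Your six-element algebra is the standard rough-set quotient of the approximation space with classes $\{1\}$ and $\{2,3\}$, i.e.\ the product of the two- and three-element pre-rough algebras, so the appeal to the ``well-known fact'' is legitimate: the order is componentwise inclusion of $(L,U)$-pairs, joins and meets are componentwise (closure under componentwise union holds because no singleton class can land in the boundary of the union), and negation $(D_1,D_2)\mapsto(D_2^{c},D_1^{c})$ keeps the structure closed. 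Your $K=\{d,\mathbf{1}\}$ is upward closed, closed under $L$ (both members are definite, so $Lx=x$), and closed under $\sqcap,\sqcup$, hence a nontrivial lattice L-filter; and $e\sqcup m=p$ is incomparable with $d$, the only element of $K\setminus\{\mathbf{1}\}$, which is precisely the displayed condition. As a side benefit, your example also directly witnesses the non-cofinality phenomenon the paper emphasizes after the theorem, since $p\in S\setminus\{\mathbf{1}\}$ lies below no element of $K\setminus\{\mathbf{1}\}$; one could add that $K$ is just the principal filter generated by the definite element $d=\neg e$, which makes the ``simple construction'' the paper alludes to fully explicit.
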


The proof involves a simple construction, but the point we want to make is that $K\setminus \{1\}$ may or may not be cofinal  
in $S\setminus \{1\}$. This is important as such a $K$ may be interpreted to consist of \emph{big} elements alone. We will refer to such L-filters or lattice L-filters as \emph{cofine}.

\begin{theorem}
Given a pre-rough algebra with no nontrivial lattice L-filters, we can construct an infinite number of pre-rough algebras with the same property. 
\end{theorem}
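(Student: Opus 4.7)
The plan is to exhibit a parameterized family of pre-rough algebras, all lacking nontrivial lattice L-filters, built from the given $A$. First I would isolate the structural reason $A$ has no nontrivial lattice L-filter: any L-filter $K$ with $\{1\} \subsetneq K$ must contain some non-top element $x$, and closure under $L$ together with $\sqcap$-closure and upward-closure must eventually force $0 \in K$, whence $K = A$. This characterization is what the construction must preserve.

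Second, for each $n \in \mathbb{N}$ I would construct $A_n$ as a \emph{horizontal-sum}: take $n$ disjoint copies $A^{(1)}, \dots, A^{(n)}$ of $A$, identify all their zeros to a single $0$ and all their ones to a single $1$, and extend the lattice so that elements from distinct copies meet at $0$ and join at $1$, while $L, U, \neg$ act coordinate-wise inside each copy. Verify the pre-rough algebra axioms: the partial order is a bounded distributive lattice on the glued carrier; $L$ and $U$ remain interior and closure operators because their defining inequalities are preserved copy-by-copy; the equations linking $L, U, \neg, \sqcap, \sqcup$ hold within each $A^{(i)}$, and reduce on cross-copy pairs to trivial identities at $0$ or $1$.

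Third, I would check the no-nontrivial-lattice-L-filter property. Given a lattice L-filter $K$ of $A_n$ with $K \neq \{1\}$, choose $x \in K$ with $x \neq 1$; then $x \in A^{(i)} \setminus \{1\}$ for some $i$. The set $K \cap A^{(i)}$ inherits upward-closure in $A^{(i)}$ and closure under $L, \sqcap, \sqcup$ from $K$, so it is a lattice L-filter of $A^{(i)}$ properly containing $\{1\}$. By hypothesis on $A$ this forces $K \cap A^{(i)} = A^{(i)}$, so $0 \in K$, hence $K = A_n$. Since $|A_n|$ grows strictly with $n$, the family $\{A_n\}_{n \in \mathbb{N}}$ is infinite up to isomorphism.

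The main obstacle will be verifying the pre-rough algebra axioms after gluing, especially the interaction of $\neg$ with $\sqcap, \sqcup$ across distinct copies and the compatibility of $L, U$ with the lattice order on the glued structure; naive coordinate-wise definitions could fail at the identified $0$ and $1$ unless one checks that $\neg 0 = 1$ and $\neg 1 = 0$ remain consistent across all copies. A safer fallback, if horizontal sums prove delicate, would be to replace the gluing by $n$-fold coproducts in the variety of pre-rough algebras and then quotient by the congruence identifying extremal elements, proving by a congruence-theoretic argument that the resulting quotients also inherit the filter-theoretic property from $A$.
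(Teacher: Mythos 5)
Your filter-lifting step is sound: restricting a lattice L-filter $K\neq\{1\}$ of the glued algebra to the copy containing a non-top element of $K$ does give a lattice L-filter of that copy other than $\{1\}$, and the hypothesis then forces $0\in K$ and hence $K=A_{n}$. The genuine gap is the step you explicitly deferred, namely that the horizontal sum $A_{n}$ is again a pre-rough algebra; in general it is not. Pre-rough algebras in the sense of \cite{BC2} sit on distributive De Morgan lattices and satisfy $L(a\sqcup b)=La\sqcup Lb$ together with the implication that $La\leq Lb$ and $Ua\leq Ub$ entail $a\leq b$. If $A$ has any element $x$ with $0<x<1$, then in $A_{3}$ the three copies of $x$ together with the shared $0$ and $1$ form a sublattice isomorphic to $M_{3}$, so distributivity fails; already $A_{2}$ fails when $A$ contains a chain $0<x<y<1$, since for a non-extremal $z$ in the other copy one gets $x\sqcup(z\sqcap y)=x$ while $(x\sqcup z)\sqcap(x\sqcup y)=y$. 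The coordinate-wise $L,U$ also break the axioms across copies: if $A$ has an element with $Lx=0$ and $Ux=1$ (as the three-element pre-rough algebra does), its two copies $x^{(1)},x^{(2)}$ give $L(x^{(1)}\sqcup x^{(2)})=L1=1$ but $Lx^{(1)}\sqcup Lx^{(2)}=0$, and the pre-rough implication would force $x^{(1)}=x^{(2)}$. Finally, in the degenerate case where $A$ is the two-element algebra (which does satisfy the hypothesis), every $A_{n}$ is isomorphic to $A$, so your family is not infinite there.

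Your fallback does not repair this: since $0$ and $1$ are constants of the signature, a coproduct already identifies the extremal elements of the factors, so the proposed quotient is vacuous, and no reason is given why the (large, hard to describe) coproduct in this quasivariety should inherit the absence of nontrivial lattice L-filters. The paper's construction is a different pasting: it adjoins to $A$ a pair of new three-element pre-rough algebras, identifying tops and bottoms, and stipulates that the two new middle elements are each other's negations, then iterates; this grows the algebra even in the two-element case and adds only a fixed small configuration at each step, though the paper's own verification is only sketched as a ``visual proof''. Your restriction-to-a-copy argument would adapt to any such pasting, but what your proposal is missing is precisely the verification (or an explicit weakening of the axiom system under which it holds) that the chosen pasting stays inside the class of pre-rough algebras.
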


\begin{proof}
We suggest a completely visual proof for this. Simply paste a pair of three element pre-rough algebras to the original pre-rough algebra (identifying all the tops and bottoms respectively) and require that the negation of one of the non boundary element is the other. The infinite number of pre-rough algebras follow by recursive application of the process. 

A second proof can be through the fact that the product of two pre-rough algebras with the property satisfies the property.      
\end{proof}

\begin{definition}
Given a L-filter $K$ on $Q$, for any $x,  y\in Q$ let 
\begin{equation*}
x \Cup  y  = 
\begin{cases}
 x\sqcup y & \text{if} \;  x\sqcup y \in K\\
\text{undefined} & \text{otherwise.}
\end{cases}
\end{equation*}
\begin{equation*}
x \Cap  y  = 
\begin{cases}
 x\sqcap y & \text{if}\;   x\sqcap y \in K\\
\text{undefined} & \text{otherwise.}
\end{cases}
\end{equation*}
Further let $x\lhd y$ iff $x=y$ or $x\Cap y = x$ or $x\Cup y=y$.
\end{definition}

\begin{proposition}
The relation $\lhd$ is a partial order that is not necessarily a lattice order, but is compatible with the operations $L, U$. Further the restriction of $\lhd$ to $K$ has already been described above. 
\end{proposition}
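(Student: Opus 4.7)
The plan is to verify the three axioms of a partial order, exhibit a situation where $\lhd$ is not a lattice order, establish compatibility with $L$ and $U$, and then specialise to $K$. Throughout I will use that $K$ is a L-filter (so F0 and O1 hold), that the underlying $\sqsubseteq$-structure on $Q = \wp(S)|\approx$ is a bounded distributive lattice with $\sqcap,\sqcup$, and that $Lx \leq x \leq Ux$ for every $x\in Q$.

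\emph{Reflexivity} is immediate from the clause $x=y$ in the definition. For \emph{antisymmetry}, assume $x\lhd y$ and $y\lhd x$; if either clause is $x=y$ we are done, otherwise every surviving case forces $x\sqcap y = x$ or $x\sqcup y = y$ together with $y\sqcap x = y$ or $y\sqcup x = x$, each of which collapses to $x\leq y$ and $y\leq x$ in the underlying lattice, whence $x=y$. For \emph{transitivity}, suppose $x\lhd y$ and $y\lhd z$ and run through the six non-trivial sub-cases by pairing the two surviving clauses. Each case gives $x\leq y\leq z$ in the underlying lattice, so the issue reduces to showing that the corresponding $x\Cap z$ or $x\Cup z$ is defined in $K$. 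This is where F0 does the work: if some $\Cap$ witnessed $x\lhd y$ (so $x\in K$) one uses $x\leq z$ and F0 to get $z\in K$ and $x\Cap z = x$; if some $\Cup$ witnessed $y\lhd z$ (so $z\in K$) one uses $x\leq z$ and again $x\Cup z = z$. In the remaining mixed case where $x\Cap y = x$ and $y\Cup z = z$, $x\in K$ already suffices to define $x\Cap z = x$. I expect this bookkeeping of which element is forced to lie in $K$ to be the main obstacle, but F0 is exactly tailored for it.

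To see that $\lhd$ need not be a lattice order, I will take the smallest possible L-filter, $K = \{1\}$. Then $x\Cup y$ is defined only when $x\sqcup y = 1$ and $x\Cap y$ only when $x\sqcap y = 1$ (hence both $x=y=1$); so $\lhd$ collapses to equality together with the relations $x\lhd 1$ whenever $x\sqcup\text{(something)} = 1$. On any non-trivial pre-rough algebra this fails to produce binary joins for incomparable elements below $1$, so $\lhd$ is not a lattice order.

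For \emph{compatibility} with $L$ and $U$, I will assume $x\lhd y$ and split into the three clauses. If $x = y$ the conclusion is trivial. If $x\Cap y = x$, then $x\in K$, hence $Lx\in K$ by O1 and $Ux\in K$ by F0 (using $x\leq Ux$); monotonicity of $L$ and $U$ then gives $Lx\sqcap Ly = Lx$ and $Ux\sqcap Uy = Ux$, so $Lx\Cap Ly = Lx$ and $Ux\Cap Uy = Ux$. If $x\Cup y = y$, then $y\in K$ and again $Ly, Uy \in K$ by O1 and F0, and monotonicity gives $Lx\Cup Ly = Ly$ and $Ux\Cup Uy = Uy$; either way $Lx\lhd Ly$ and $Ux\lhd Uy$. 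Finally, for the \emph{restriction to $K$}, note that for $x,y\in K$ we have $x\sqcup y \in K$ automatically by F0, so $x\Cup y = y$ is equivalent to $x\leq y$; conversely, every witnessing clause for $x\lhd y$ forces $x\leq y$ in the underlying order. Hence $\lhd\!\restriction_{K}\, = \,\leq\!\restriction_{K}$, which is the join-semilattice order described in the earlier proposition on L-filters.
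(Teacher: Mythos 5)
Your argument is correct in substance, but it is much more detailed than, and partly different in route from, the paper's own proof. The paper disposes of the proposition with a single remark: the failure of the lattice-order property is attributed to the failure of the absorption laws in most pre-rough algebras for a suitable choice of L-filter, and the partial-order and compatibility claims are left as routine. You instead verify everything explicitly: reflexivity/antisymmetry/transitivity by reducing each clause of $\lhd$ to the underlying rough inclusion $\leq$ and using F0 to keep the relevant witness inside $K$; compatibility with $L,U$ via O1, F0 (with $x\leq Ux$) and monotonicity; and the non-lattice claim via the concrete minimal L-filter $K=\{1\}$, under which $x\lhd y$ collapses to $x=y$ or $y=1$. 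This buys a self-contained, checkable proof and an explicit counterexample, whereas the paper's absorption-law remark is shorter and ties the failure directly to the algebraic identities that would be needed for $\Cup,\Cap$ to give a lattice.

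Two small slips, neither fatal. First, in the transitivity bookkeeping the case you label as ``the remaining mixed case'' ($x\Cap y=x$ and $y\Cup z=z$) is already covered by your first two bullets; the genuinely remaining case is $x\Cup y=y$ together with $y\Cap z=y$, where $y\in K$ and F0 with $y\leq z$ give $z\in K$, so $x\Cup z=z$ witnesses $x\lhd z$ --- same tool, just state it. Second, in the $K=\{1\}$ example it is the binary \emph{meets} that fail (two distinct non-top elements have no common $\lhd$-lower bound), while joins do exist and equal $1$; the example still refutes the lattice-order property, but your stated reason should be corrected.
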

\begin{proof}
Absorption laws can be shown to fail in most pre-rough algebras for a suitable choice of a L-filter. 
\end{proof}

\begin{definition}
By a \emph{operationally contamination-free prerough} algebraic system (or OCPR system), we will mean a partial algebraic system of the form 
$Y= \left\langle \underline{Q},  \lhd ,  L ,   U,  \Cup,  \Cap ,   0,  1\right\rangle $, with the operations and relations being as defined above ($U$ is the operation induced by upper approximation operator on $Q$). 
\end{definition}

\begin{definition}
By a \emph{OC-system} (resp \emph{lattice OC-system}) we will mean a pair of the form $\llan{Q}, K \rlan$ consisting of a pre-rough algebra $Q$ and a L-filter (resp. lattice L-filter ) $K$. If $K$ is cofine, then we will refer to the system as being \emph{cofine}.  
\end{definition}

\begin{theorem}
\begin{enumerate}
\item {If $K$ is a lattice L-filter, then $K^{+}= \{y : :(\forall x\in K) x\sqcup y =1 \}$ with induced operations from the pre-rough algebra is a lattice L-filter. Such filters will be termed \emph{supremal}.}
\item {$K$ is a cofine lattice L-filter iff $K^{+}  =  \{1\}$.}
\item {The collection of all supremal lattice L-filters can be boolean ordered with an order distinct from the order on lattice L-filters.}
\end{enumerate}
\end{theorem}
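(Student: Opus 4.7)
The plan is to verify the three claims in order, using two facts about the underlying pre-rough algebra that are already recorded above: the lattice reduct is distributive, and $L(a\sqcup b) = La\sqcup Lb$.

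For Part 1, I would directly check the four defining conditions for $K^{+}$ to be a lattice L-filter. Upward closure F0 is immediate: if $y\in K^{+}$ and $y\leq z$, then $x\sqcup z \geq x\sqcup y = 1$ forces $x\sqcup z = 1$ for every $x\in K$. Closure under $\sqcup$ is associativity: $x\sqcup(y\sqcup z) = (x\sqcup y)\sqcup z = 1$. Closure under $\sqcap$ uses distributivity in $Q$: $x\sqcup(y\sqcap z) = (x\sqcup y)\sqcap(x\sqcup z) = 1\sqcap 1 = 1$. For O1, apply $L$ to the equation $x\sqcup y = 1$ and use $L$-distributivity over $\sqcup$ to obtain $Lx\sqcup Ly = L(x\sqcup y) = L1 = 1$; since $Lx\leq x$, it follows that $x\sqcup Ly\geq Lx\sqcup Ly = 1$, so $Ly\in K^{+}$.

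For Part 2, I would read \emph{cofine} as ``$K\setminus\{1\}$ cofinal in $S\setminus\{1\}$''. If $K$ is cofine and $y\in S\setminus\{1\}$, cofinality provides $c\in K\setminus\{1\}$ with $y\leq c$, so $c\sqcup y = c\neq 1$ and $y\notin K^{+}$, giving $K^{+}=\{1\}$. Conversely, if $K^{+}=\{1\}$ and $s\in S\setminus\{1\}$, the condition $s\notin K^{+}$ produces $x\in K$ with $x\sqcup s\neq 1$; the element $c := x\sqcup s$ belongs to $K$ by F0, is different from 1, and dominates $s$, which witnesses cofinality.

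For Part 3, I would set up $K\mapsto K^{+}$ as a self-Galois connection on the poset of lattice L-filters. It is order-reversing by inspection, and $K\subseteq K^{++}$ because every $x\in K$ joins to 1 with every $y\in K^{+}$. Standard Galois-closure manipulations then give $K^{+++}=K^{+}$, so the supremal filters are exactly the $M$ satisfying $M = M^{++}$. These Galois-closed filters form a complete lattice under inclusion with meets given by intersection and joins given by $M\vee N = (M^{+}\cap N^{+})^{+}$, together with the involution $M\mapsto M^{+}$. The identity $M\cap M^{+}=\{1\}$ follows because any $x$ in both forces $x\sqcup x = x = 1$, while $M\vee M^{+}$ is the top supremal filter, so $M\mapsto M^{+}$ is a genuine orthocomplementation. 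To promote this to a Boolean algebra I would verify distributivity using a representation of each supremal filter through its join-generators and the distributivity of $Q$. The ``distinct order'' claim is then addressed by noting that the join $M\vee N = (M^{+}\cap N^{+})^{+}$ on supremal filters is not the join in the lattice of all lattice L-filters (which is the filter generated by $M\cup N$), so the two lattice structures on the same underlying inclusion order disagree.

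The main obstacle lies in Part 3, specifically in upgrading the ortholattice of supremal filters to a Boolean algebra: an abstract Galois connection yields only an ortholattice, so distributivity has to be imported from features specific to pre-rough algebras. My plan is to attack this by passing to the distributive reduct through the representation of $K^{+}$ by join-generators rather than arguing purely at the level of filters. Parts 1 and 2 are essentially bookkeeping once the pre-rough algebra identities have been invoked.
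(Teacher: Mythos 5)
The paper itself defers the proof of this theorem (``Missing proofs will appear separately''), so there is nothing to compare against; judged on its own merits, your Parts 1 and 2 are correct and complete. The verification of F0, closure under $\sqcup$, closure under $\sqcap$ via distributivity, and O1 via $L(x\sqcup y)=Lx\sqcup Ly$, $L1=1$, $Lx\leq x$ uses exactly the identities the paper records for pre-rough algebras, and your two-way argument for Part 2 (in particular taking $c=x\sqcup s\in K$ by F0 in the converse direction) is clean under the natural reading of \emph{cofine} as ``$K\setminus\{1\}$ cofinal in $S\setminus\{1\}$'', which is the reading the theorem statement itself forces.

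The genuine gap is the one you flag yourself in Part 3: you establish only an orthocomplemented complete lattice of Galois-closed (supremal) filters, and the Boolean claim -- distributivity -- is left as a plan (``representation through join-generators'') that is neither carried out nor obviously workable, since distributivity of $Q$ does not by itself transfer to a lattice of closed sets of a polarity. However, the gap closes with a standard result rather than with pre-rough-specific features: your own computation that $K\cap K^{+}=\{1\}$ and that any lattice L-filter $G$ with $G\cap K=\{1\}$ satisfies $G\subseteq K^{+}$ (for $y\in G$, $x\in K$, the element $x\sqcup y$ lies in $G\cap K$ by upward closure) shows that $K^{+}$ is the pseudocomplement of $K$ in the meet-semilattice of lattice L-filters (meet $=$ intersection, bottom $=\{1\}$). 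The Glivenko--Frink theorem for pseudocomplemented semilattices then says the closed elements, i.e.\ exactly the supremal filters, form a Boolean algebra with meet the intersection and join $(M^{+}\cap N^{+})^{+}$ -- no separate distributivity check is needed, and your orthocomplementation $M\mapsto M^{+}$ is the Boolean complement. With that substitution Part 3 is sound, and your observation that the Boolean join $(M^{+}\cap N^{+})^{+}$ differs from the join in the lattice of all lattice L-filters is a reasonable way to discharge the (vaguely stated) ``distinct order'' clause.
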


Thus starting from a standard rough domain (corresponding to pre-rough algebras), we have arrived at new rough semantic domains.
At least two distinct partial algebras can be defined with one being an extension of a pre-rough algebra, while OC-pre-rough systems constitute a severe generalization. The natural correspondences from a pre-rough algebra to a cofine L-filter (or lattice L-filter) would be forgetful closed morphisms that preserves all operations except for $\neg$.

\subsection*{Remarks} 

In this research paper, we have developed the mathematics of fine-grained comparison of one \textsf{RYS} with another and of reducing contamination of operations. The process has involved a number of steps including the exactification of the concept of granular rough evolution, identification of various types of correspondences, concepts of comparison of those correspondences and algebraisation of concepts of relevance/bigness. These constitute an alternative/ supplement to the earlier approach to measures due to the present author in \cite{AM240} and of course the usual real-valued measures of \textsf{RST}. The number of types of correspondences has also been expanded upon relative \cite{AM1800} and more properties have been established. As per the new approach comparison with classical rough set semantics or any other rough semantics should essentially be constructed by the force of granular axioms. 

\emph{This Paper is a peprint of the paper presented in FUZZIEEE'2013 and is also available in IEEE Xplore:10.1109/FUZZ-IEEE.2013.6622521 }

\bibliographystyle{IEEEtran}
\bibliography{biblioam06092015.bib}
\end{document}